\documentclass[sn-mathphys-num]{sn-jnl}
\usepackage[title]{appendix}%
\usepackage{lmodern}
\usepackage{amsmath,amsthm}
\usepackage{amssymb}
\usepackage{amsfonts}
\usepackage{amscd}
\usepackage{booktabs}
\usepackage{color}
\usepackage{subcaption}

\newtheorem{thm}{Theorem}[section]
\newtheorem{theorem}[thm]{Theorem}
\newtheorem{prop}[thm]{Proposition}

\newtheorem{lemma}[thm]{Lemma}

\theoremstyle{definition}

\newtheorem{definition}[thm]{Definition}
\newtheorem{defi}[thm]{Definition}

\newtheorem{remark}[thm]{Remark}
\newtheorem{observation}[thm]{Observation}

\raggedbottom

\newcommand{\R}{\mathbb{R}}

\newcommand{\E}{\mathbb{E}}

\newcommand{\X}{\mathcal{X}}

\newcommand{\Y}{\mathcal{Y}}
\newcommand{\T}{T}

\newcommand{\cL}{\mathcal{L}}

\newcommand{\cC}{\mathcal{C}}
\newcommand{\cD}{\mathcal{D}}
\newcommand{\D}{\mathcal{D}}
\newcommand{\cF}{\mathcal{F}}

\newcommand{\beq}{\begin{equation}}
\newcommand{\eeq}{\end{equation}}
\newcommand{\pa}[1]{\left( #1 \right)}
\newcommand{\bra}[1]{\left[ #1 \right]}
\newcommand{\set}[1]{\left\{ #1 \right\}}
\newcommand{\ang}[1]{\left<#1\right>}
\newcommand{\Span}{\mathrm{span}}

\newcommand{\diag}{\mathrm{diag}}
\newcommand{\re}{\mathrm{e}}
\newcommand{\Hess}[1]{{H}\pa{#1}}
\newcommand{\der}[2]{\frac{\partial #1}{\partial #2}}
\newcommand{\lra}{\longrightarrow}

\newcommand{\nicolas}[1]{{#1}}
\newcommand{\eliot}[1]{{#1}}

\newcommand{\revision}[1]{{#1}}

\begin{document}
\title{Cartan moving frames and the data manifolds}

\author*[1]{\fnm{Eliot} \sur{Tron}}\email{eliot.tron@enac.fr}

\author[3]{\fnm{Rita} \sur{Fioresi}\email{rita.fioresi@UniBo.it}}

\author[1,2]{\fnm{Nicolas} \sur{Couellan}}\email{nicolas.couellan@recherche.enac.fr}

\author[1,2]{\fnm{St\'{e}phane} \sur{Puechmorel}}\email{stephane.puechmorel@enac.fr}

\affil[1]{\orgname{Ecole Nationale de l'Aviation Civile}, \orgaddress{\street{7 Avenue Edouard Belin}, \city{Toulouse}, \postcode{31400}, \country{France}}}

\affil[2]{\orgname{Institut de Mathématiques de Toulouse}, \orgdiv{UMR 5219}, Universit\'{e} de Toulouse, CNRS, UPS, \orgaddress{\street{118 route de Narbonne}, \city{Toulouse}, \postcode{F-31062} Cedex 9, \country{France}}}

\affil[3]{\orgname{FaBiT, Università di Bologna}, \orgaddress{\street{via S. Donato 15}, \postcode{I-40126} \city{Bologna}, \country{Italy}}}



\keywords{Neural Networks, Data Manifolds, Moving Frames, Curvature, Explainable AI}

\abstract{
The purpose of this paper is to employ the language of Cartan
moving frames to study the geometry of the data manifolds and
its Riemannian structure, via the data information metric
and its curvature at data points. \nicolas{Using this framework and through experiments, explanations on the response of a neural network are given by pointing out the output classes that are easily reachable from a given input. This emphasizes how the proposed mathematical relationship between the output of the network and the geometry of its inputs can be exploited as an explainable artificial intelligence tool.} 
}

\maketitle

\section{Introduction}\label{intro-sec}

In machine learning, the idea of data manifold is based on the assumption that the data space,
containing the data points on which we perform classification tasks,
has a natural Riemannian manifold structure. It is a quite old concept (see \cite{fefferman2016}, \cite{Roweis2000}
and refs therein) and it is linked to the key
question of dimensionality reduction \cite{maaten},
which is the key to efficient data processing
for classification tasks, and more. 
Several geometrical tools, such as geodesics, connections,
Ricci curvature, become readily available for machine learning problems
under the manifold hypothesis, and are especially effective
when employed in examining adversarial attacks \cite{shenDefendingAdversarialAttacks2019,
martinInspectingAdversarialExamples2019,
carliniAdversarialExamplesAre2017,zhaoAdversarialAttackDetection2019, Tron2022CanonicalFO}
or knowledge transfer questions,
see \cite{Weiss2016ASO, Pan2010ASO} and refs. therein.

More specifically, 
the rapidly evolving discipline of information geometry \cite{Sun2014AnIG,
amariDifferentialGeometryStatistical1987, amari2016information,
nielsenElementaryIntroductionInformation2020}
is now offering methods for discussing the above questions, once
we cast them appropriately by relating the statistical manifold, i.e.
the manifold of probability measures, studied in information geometry
with data manifolds (see \cite{Sun2014AnIG, Grementieri_Fioresi_2022}). \revision{Data manifolds have also been studied from a Riemannian geometric perspective by pullbacks of the Fisher information metric on the probability simplex through simple families of functions in the field of {metric learning} \cite{Lebanon_2012, Wang_Sun_Sha_Marchand-Maillet_Kalousis}.

In this paper, }we are interested in a naturally emerging foliation structure on the data space
coming through the {\sl data information matrix} (DIM), which is the analog of
the Fisher information matrix and in concrete experiments can be obtained by looking
at a 
deep learning neural network for classification tasks. 
As it turns out,
the leaves of such foliation are related with the dataset the network was
trained with \cite{Grementieri_Fioresi_2022, tronfioresi}.
Further work in \cite{tronfioresi} linked such study to
the possible applications to knowledge transfer. 

The purpose of the present paper is to study and understand the {\sl data manifolds}, with Cartan moving frames method.
Following the philosophy in \cite{Sun2014AnIG}, we want to equip the manifolds
coming as leaves of the above mentioned foliation, via a natural metric coming through the
data information matrix. As it turns out, the partial derivatives of the probabilities allow us to naturally
define the Cartan moving frame at each point and are linked with the curvature
of such manifold in an explicit way.

\nicolas{From a broader perspective, the work proposed here emphasizes the mathematical relationship between the changes in the outputs of a neural network and the curvature of the data manifolds. Furthermore, we show how this relationship can be exploited to provide explanations for the responses of a given neural network. In critical systems, providing explanations to AI model decisions can sometimes be required or mandatory by certification processes (see for example \cite{AIR6988}). More generally, the field of eXplainable Artificial Intelligence (AI) is a fast growing research topic that develops tools for understanding and interpreting predictions given by AI models \cite{samek2019}. In Section~\ref{sec:exp} of this article, through simple experiments, we show how 
the DIM restricted to the moving frame given by the neural network output probability partial derivatives can be used to understand the local geometry of trained data.
Specifically, in the case of the MNIST handwritten digits dataset \cite{lecun1998mnist}
 and CIFAR10 animals / vehicles dataset \cite{cifar-10} 
by displaying the restricted DIM in the form of images, we are able to understand, starting from a given data point, which classes are easily reachable by the neural networks or not.
}

\medskip
The organization of the paper is as follows.

In Section~\ref{sec:preliminaries}, we briefly recap some notions on information geometry and
some key known results that we shall need in the sequel. Our main reference
source will be \cite{amari2016information,
nielsenElementaryIntroductionInformation2020}.

In Section~\ref{sec:cartan-moving-frames}, we take advantage of the machinery developed by Cartan
(see \cite{tu2017differential,Jost2008RiemannianGA}) and define moving frames
on the data manifolds via the partial derivatives of the probabilities.

In Section~\ref{sec:curv-of-data-leaves} and \ref{sec:computation-curv-forms}, we relate the probabilities with the curvature of the leaves, by deriving the calculations of the curvature forms in a numerically stable way.

Finally in Section~\ref{sec:exp}, we consider some experiments on MNIST and CIFAR10 elucidating how
near a data point, some partial derivatives of the probabilities become more important and the metric
exhibits some possible singularities.

\bigskip
{\bf Acknowledgements.}
We thank Emanuele Latini, Sylvain Lavau for 
helpful discussions. This research was supported by Gnsaga-Indam, by COST Action
CaLISTA CA21109, HORIZON-MSCA-2022-SE-01-01 CaLIGOLA, MSCA-DN CaLiForNIA - 101119552,
PNRR MNESYS, PNRR National Center for HPC, Big Data and
Quantum Computing, INFN Sezione Bologna. ET is grateful to the FaBiT Department of
the University of Bologna for the hospitality.


\section{Preliminaries}
\label{sec:preliminaries}

The {\it neural networks} we consider 
are classification functions 
\[
N:\X \times \Theta \lra \set{p\in\R^C ~;~\sum_k p_k = 1,~ p_k > 0},\]
from the spaces of data $\X$ and weights (or parameters) $\Theta$,
to the space of parameterized probability densities
$p\pa{y\mid x,\theta}$ over the set of labels $\Y$, where
$x\in\X$ is the input,  $y\in \Y$ is the target label
and $\theta\in \Theta$ is the parameter of the model.
For instance $\theta$ are the weights and biases in a perceptron model
predicting $C$ classes, for a given input datum $x$.

We may assume both the dataspace $\X$ and the parameter space
$\Theta$ to be (open) subsets of euclidean spaces,
$\X \subset \R^d$, $\Theta \subset \R^n$, though it is clear
that in most practical situations only a tiny portion of
such open sets will be effectively occupied by data points or parameters
of an actual model.

In the following, we make only two assumptions on $N$:
\begin{align}
    & N = \mathrm{softmax} \circ s \tag{H1}\label{H1}\\
    \forall i,j,k, \quad & \partial_{x_i} \partial_{x_j} s_k(x) = 0,  \qquad \revision{\forall x\in \X\setminus A}\tag{H2}\label{H2}
\end{align}
where $s$ is called a \emph{score} function, \revision{$A$ is a nowhere dense set} and $\mathrm{softmax}$ is the function $\pa{a_i}_i\mapsto \pa{e^{a_i} / \sum_k e^{a_k}}_i$. We detail why these assumptions are important in the following sections. 

\revision{Examples of neural networks that satisfies these assumptions are feed-forward multi-layer perceptrons, or convolutional neural networks, with ReLU activation functions or variants such as leaky ReLU, or any piecewise linear activation functions. These kind of architectures are popular within the machine learning community.}


\medskip
A most important notion in information geometry is the Fisher-Rao matrix $F$, providing
in some important applications, a metric on the space $\Theta$:
\[
F=(f_{ij}):=\left(\E_{y \mid x,\theta}\bra{\partial_{\theta_i} {\ln p(y\mid x,\theta)}
\partial_{\theta_j} {\ln p(y\mid x,\theta)}}\right), \qquad 1\leq i,j\leq n 
.\]
We now give in analogy to $F$ the {\sl data information matrix} $D$ (DIM for short), defined in
a similar way.

\medskip
\begin{defi} \label{dim-def}
{\rm
We define \textit{data information matrix} $D=(D_{ij})$ for a point
$x \in \X$ and a fixed model $\theta \in \Theta$ to be
the following symmetric matrix:
\[
D_{i j}(x) = \E_{y \mid x,\theta}\bra{\partial_{x_i} {\ln p(y\mid x,\theta)}
\partial_{x_j} {\ln p(y\mid x,\theta)}}, \qquad 1\leq i,j\leq d
.\]
}
\end{defi}

\revision{In the data information matrix, the derivative is taken with respect to the input $x$ and not with respect to the parameter $\theta$ as in the Fisher matrix.} Hence, $D$ is a $d \times d$ matrix - $d$ the dimension of the dataspace - \revision{which makes it possible the study of the relationship between input-output changes in the neural network.}\footnote{\revision{Note that $D$ can be retrieved as the pullback of the Fisher metric on the $(C-1)$-simplex by the neural network $N$.}}
\begin{observation}
In what follows, we will use the notation $p_k(x) := p\pa{y_k\mid x,\theta}$ in the case of classification into a finite set $\Y = \set{y_1,\ldots,y_C}$. We then concatenate these values in the vector $p(x) = \pa{p_k(x)}_{k=1,\ldots,C}$. In practice, $p(x) = N(x)$.

We omit the dependence in $\theta$ as the parameters remain unchanged during the rest of the paper. We shall omit the dependence in $x$ too in long computations for easier reading.
\end{observation}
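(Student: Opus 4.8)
The plan is to recognize that this statement is an \emph{observation} fixing notation, not a proposition with genuine mathematical content: the only assertion that could conceivably be verified is the identity $p(x) = N(x)$, and this follows immediately from the definitions already in place. First I would recall that in the Preliminaries the network was introduced as a map $N : \X \times \Theta \lra \set{p\in\R^C ~;~\sum_k p_k = 1,~ p_k > 0}$ whose output is exactly the parameterized probability density $p\pa{y\mid x,\theta}$ over the label set $\Y$. Specializing to a finite label set $\Y = \set{y_1,\ldots,y_C}$, this density is the discrete distribution with weights $p\pa{y_k\mid x,\theta}$, which is by definition the entry $p_k(x)$. Reading off the identity component by component, $N(x)_k = p\pa{y_k\mid x,\theta} = p_k(x)$ for every $k$, hence $N(x) = p(x)$.

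The rest of the statement consists of \textbf{conventions} rather than claims. The abbreviation $p_k(x) := p\pa{y_k\mid x,\theta}$ merely names the conditional probability of the $k$-th label; concatenating the $p_k(x)$ into the vector $p(x)$ is a definition; and suppressing the fixed parameter $\theta$ (and, in long computations, the argument $x$) is a notational shortcut. None of these require proof, only well-posedness. The one point I would briefly confirm is that suppression is unambiguous: since $\theta$ is held fixed throughout, dropping it loses no information, and since $x$ is always reinstated wherever a derivative $\partial_{x_i}$ is taken (as in Definition~\ref{dim-def}), dropping it elsewhere cannot create confusion.

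There is no genuine obstacle to overcome. If forced to name the single delicate point, it would be the qualifier \emph{in practice} attached to $p(x) = N(x)$: in information-geometric language $p$ often denotes a target (ground-truth) conditional law while $N$ denotes the model's output, and the identity encodes the modelling choice that, once the network is fixed, the distribution we work with \emph{is} the network's output. Thus the cleanest reading is that $p(x) = N(x)$ holds by definition of the objects as used in this paper, and the observation is best understood as fixing notation for the computations of the following sections rather than as something to be established.
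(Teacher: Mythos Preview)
Your reading is correct and matches the paper: this \emph{observation} is purely a declaration of notation, the paper offers no proof for it, and the only verifiable claim---$p(x)=N(x)$---is immediate from the definition of $N$ as the map returning the probability vector. There is nothing to add.
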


\medskip
As one can readily check, we have that:
\beq
D_{ij}(x) =\sum_k \frac{1}{p_k(x)} \partial_{x_i} p_k(x) \partial_{x_j} p_k(x)
.\eeq

In the following, we will use $\partial_{i}$, the canonical basis of $\T\X$ associated to the coordinates $x^i$, and the Einstein summation notation.
\medskip
\begin{remark} {\rm Notice the appearance of the probability $p_i$
at the denominator in the expression of $D$. Since $p(y\mid x,\theta)$
is an empirical probability,
on data points it may happen that some of the $p_k(x)$ are 
close to zero, giving numerical instability in practical situations.
We shall comment on this problem and how we may solve it, later on.}
\end{remark}

We have the following result 
\cite{Grementieri_Fioresi_2022}, we briefly recap its proof, for completeness.

\medskip
\begin{thm}\label{thm:kerF}
The data information matrix $D(x)$ is positive semidefinite, moreover
\begin{equation*}
\ker D(x) = \pa{\Span_{k=1,\ldots,C}\set{\sum_i\partial_{i} \ln p_k(x) \partial_{i}}}^\perp
\end{equation*}
with $\perp$ taken w.r.t. the Euclidean scalar product on $\T\X$.
Hence,  
the rank of $D$ 
is bounded by $C-1$, with $C$ the number of classes.
\end{thm}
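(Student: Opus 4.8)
The plan is to read off everything from the closed-form expression $D_{ij}(x) = \sum_k \frac{1}{p_k(x)}\,\partial_{x_i}p_k(x)\,\partial_{x_j}p_k(x)$ recorded above, after rewriting it in manifest Gram-matrix form. Since $\partial_{x_i}\ln p_k = p_k^{-1}\partial_{x_i}p_k$ and every $p_k(x) > 0$, one has $D_{ij}(x) = \sum_k p_k(x)\,\partial_{x_i}\ln p_k(x)\,\partial_{x_j}\ln p_k(x)$, i.e. $D(x) = \sum_k p_k(x)\,w_k w_k^\top = B^\top B$, where $w_k := \pa{\partial_{x_1}\ln p_k(x),\dots,\partial_{x_d}\ln p_k(x)}^\top$ is the Euclidean gradient of $\ln p_k$ at $x$ and $B$ is the $C\times d$ matrix with $k$-th row $\sqrt{p_k(x)}\,w_k^\top$.

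Positive semidefiniteness is then immediate: for $v = (v^i)\in\T\X$ we get $v^\top D(x)\,v = \sum_k p_k(x)\,\ang{w_k,v}^2 \ge 0$ because all $p_k(x) > 0$. For the kernel I would first invoke the standard fact that a positive semidefinite matrix annihilates exactly the vectors on which its quadratic form vanishes, here because $v^\top D(x) v = \|Bv\|^2$, so $v^\top D(x) v = 0 \Rightarrow Bv = 0 \Rightarrow D(x)v = B^\top B v = 0$. Combined with $p_k(x) > 0$ once more, the sum of squares $\sum_k p_k(x)\,\ang{w_k,v}^2$ vanishes iff $\ang{w_k,v} = 0$ for every $k$, i.e. iff $v$ is Euclidean-orthogonal to $\Span_{k}\set{w_k} = \Span_{k=1,\dots,C}\set{\sum_i\partial_i\ln p_k(x)\,\partial_i}$. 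This is exactly the asserted description of $\ker D(x)$.

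For the rank bound, write $W := \Span_k\set{w_k}$; then $\mathrm{rank}\,D(x) = d - \dim\ker D(x) = \dim W \le C$. To sharpen $C$ to $C-1$ I would use the one piece of structure not yet exploited, namely that $p(x)$ lies in the open probability simplex: differentiating $\sum_k p_k(x) = 1$ gives $\sum_k\partial_{x_i}p_k(x) = 0$ for all $i$, i.e. $\sum_k p_k(x)\,w_k = 0$; since all coefficients $p_k(x)$ are strictly positive, this is a nontrivial linear dependence among $w_1,\dots,w_C$, forcing $\dim W \le C-1$ and hence $\mathrm{rank}\,D(x)\le C-1$.

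I do not expect a genuine obstacle in this argument; the only points to handle carefully are (i) not stopping at the crude bound $\mathrm{rank}\,D(x)\le C$ coming from the $C$ gradients, but extracting the extra relation from the simplex constraint, and (ii) keeping track of the three places where strict positivity of the $p_k(x)$ is essential --- semidefiniteness, passing from ``the quadratic form vanishes'' to ``each inner product vanishes'', and making $\sum_k p_k(x)\,w_k = 0$ a nontrivial relation.
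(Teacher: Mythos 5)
Your proof is correct and follows essentially the same route as the paper: rewrite $D(x)$ as the weighted Gram matrix $\sum_k p_k(x)\,w_k w_k^\top$ of the log-probability gradients, read off positive semidefiniteness and the kernel from the vanishing of the quadratic form, and obtain the $C-1$ bound from differentiating $\sum_k p_k = 1$. If anything, you are slightly more careful than the paper, which only checks the inclusion $\pa{\Span_k\set{w_k}}^\perp \subseteq \ker D(x)$ explicitly, whereas you justify the reverse inclusion via the factorization $v^\top D(x)v = \|Bv\|^2$ and the strict positivity of the $p_k$.
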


\begin{proof} To check semipositive definiteness, let $u\in\T_x\X$. Then,
\begin{align*}
				u^T D(x) u &= \sum_{i,j} u_i u_j \E_{y\mid x,\theta} \bra{\partial_{i} \ln p\pa{y\mid x, \theta} \partial_{j} \ln p\pa{y\mid x, \theta}}\\
									 &= \E_{y\mid x,\theta} \bra{ {\sum_i u_i \partial_{i} \ln p\pa{y\mid x, \theta}}  \sum_j u_j\partial_{j} \ln p\pa{y\mid x, \theta}}\\
									 &= \E_{y\mid x,\theta} \bra{\ang{\partial_{i} \ln p\pa{y\mid x,\theta} \partial_{i}, u}_\re^2}.
\end{align*}
For the statement regarding kernel and rank of $D(x)$, we notice that $u^T D(x) u=0$,
whenever $u \in \pa{\Span_{k=1,\ldots,C}\set{\sum_i\partial_{i} \ln p_k(x) \partial_{i}}}^\perp$,
where $\ang{\cdot, \cdot}_\re$ and $\perp$ refer to the Euclidean
metric. Besides, $\sum_k p_k =1\implies \sum_{ik} \partial_i p_k = 0 $, hence the rank bounded by $C-1$ and not $C$.
\end{proof}

This result prompts us to define the distribution:
\begin{equation}\label{distr-def}
x \mapsto \D_x := \Span \set{\sum_i\partial_{i} p_k\pa{x} \partial_{i},~ k=1, \ldots, C-1}
.\end{equation}

For popular neural networks (satisfying \ref{H1} and \ref{H2}), the distribution $\D$
turns to be integrable in an open set of $\R^d$, 
hence it defines a {\sl foliation}. 
This matter has been discussed in~\cite{tronfioresi}.

\medskip
\begin{theorem}\label{frob-thm}
Let $\theta$ be the weights of a neural network classifier $N$ satisfying \ref{H1} and \ref{H2}, associated with the vector $p$ given by softmax.
Assume $D$ has constant rank. Then, at each smooth point $x$ of $N$
there exists a local submanifold $\cL$ of $\X$, such that
its tangent space at $x$, $\T_x \cL =\cD$.
\end{theorem}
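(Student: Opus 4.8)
The plan is to apply the classical Frobenius theorem to the distribution $\mathcal{D}$. Since $\mathcal{D}$ is assumed to have constant rank, it is a smooth subbundle of $\T\X$ on the open set $U$ where $N$ is smooth and \ref{H2} holds throughout a neighborhood, and there it is integrable if and only if it is involutive; so the heart of the argument is to show that $\mathcal{D}$ is closed under the Lie bracket. First I would replace the generating fields $v_k := \sum_i \partial_i p_k\,\partial_i$ by $w_k := \sum_i \partial_i \ln p_k\,\partial_i = \tfrac{1}{p_k}v_k$: because $p_k(x)>0$ for a softmax output, this rescaling does not change the spanned subspace at any point, so $\mathcal{D}_x = \Span\set{w_1(x),\dots,w_{C-1}(x)}$. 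Moreover, by \ref{H1} one has $\ln p_k = s_k - \ln\sum_m e^{s_m}$, so \ref{H2} (the scores $s_k$ are affine on $U$) gives that $\partial_i\ln p_k - \partial_i\ln p_\ell = \partial_i s_k - \partial_i s_\ell$ is constant in $x$. Hence each difference $c_k := w_k - w_1$ is a vector field with constant coefficients, and $\mathcal{D} = \Span\set{w_1,c_2,\dots,c_{C-1}}$.

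Next I would compute the brackets on this generating set. Constant vector fields commute, so $[c_k,c_\ell]=0$, and it only remains to control $[w_1,c_k]$. For a constant field $c=\sum_j c_j\partial_j$ one gets $[w_1,c] = -\sum_i\big(\sum_j c_j\,\partial_i\partial_j\ln p_1\big)\partial_i$, so the whole question reduces to identifying the Hessian of $\ln p_1$ in $x$. A short computation, using that the scores are affine and that $\sum_m p_m = 1$, shows that the Hessian of the log-partition $\ln\sum_m e^{s_m}$ equals $\sum_m p_m\,\partial_i\ln p_m\,\partial_j\ln p_m = D_{ij}(x)$, and therefore $\partial_i\partial_j\ln p_1 = -D_{ij}(x)$. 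Consequently $[w_1,c] = \sum_i (D(x)c)_i\,\partial_i$; that is, the bracket of $w_1$ with any constant field lies in the image of $D(x)$.

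To finish I would invoke Theorem~\ref{thm:kerF}: since $D(x)$ is symmetric, $\mathrm{Im}\,D(x) = (\ker D(x))^\perp = \Span\set{\sum_i\partial_i\ln p_k(x)\,\partial_i : k=1,\dots,C-1} = \mathcal{D}_x$, where the $k=C$ field drops out because $\sum_k \partial_i p_k=0$. Hence $[w_1,c_k]$ is a section of $\mathcal{D}$ for every $k$, so $\mathcal{D}$ is involutive. Because the rank of $\mathcal{D}_x$ equals $\mathrm{rank}\,D(x)$ and is constant, near any point of $U$ a maximal linearly independent subfamily of the smooth fields $w_1,c_2,\dots,c_{C-1}$ is a local frame for $\mathcal{D}$, so the Frobenius theorem applies and produces, through each such $x$, a local integral submanifold $\cL$ with $\T_x\cL = \mathcal{D}_x = \cD$.

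I expect the only real obstacle to be technical rather than conceptual: on one hand, carrying out the Hessian-of-log-partition computation cleanly and recognizing it as the DIM $D$; on the other hand, being careful about the domain. One must restrict to the open set where $N$ is smooth and where \ref{H2} holds throughout a neighborhood — for piecewise-linear (e.g. ReLU) networks this is the interior of a single linear region — so that $\mathcal{D}$ is genuinely a smooth constant-rank distribution there and the classical Frobenius theorem applies as stated; this is also where the phrase ``smooth point of $N$'' in the statement does its work.
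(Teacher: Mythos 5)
Your argument is correct, but be aware that the paper does not actually prove Theorem~\ref{frob-thm} in the text: it defers the integrability of $\cD$ entirely to \cite{tronfioresi}, so your proof stands as a self-contained alternative rather than a reconstruction. As such it holds up: replacing the $e_k$ by $w_k=\nabla\ln p_k$, observing via \ref{H1}--\ref{H2} that the differences $c_k=w_k-w_1=\nabla s_k-\nabla s_1$ are locally constant, computing $\Hess{\ln p_1}=-D$ (the Hessian of the log-partition is exactly the DIM), and identifying $\mathrm{Im}\,D(x)=(\ker D(x))^\perp=\cD_x$ via Theorem~\ref{thm:kerF} are all sound, and they give involutivity of a smooth generating family, after which constant rank and Frobenius yield the local integral manifold. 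However, you are working harder than necessary: since $\nabla\ln p_k=\nabla s_k-\sum_m p_m\nabla s_m=\sum_m p_m\pa{\nabla s_k-\nabla s_m}$ and conversely $\nabla s_k-\nabla s_\ell=\nabla\ln p_k-\nabla\ln p_\ell$, one has $\cD_x=\Span\set{\nabla s_k-\nabla s_\ell}_{k,\ell}$, which by \ref{H2} is a \emph{locally constant} distribution on each connected open set where the score is affine; its leaves are open pieces of affine subspaces and involutivity is immediate, with no need to bracket the non-constant field $w_1$ against anything. Your detour does yield the identity $\Hess{\ln p_a}=-D$, which is a nice byproduct consistent with the paper's themes, but it is not needed for integrability. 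Your care about the domain --- restricting to the open set where $N$ is smooth and \ref{H2} holds on a neighborhood, so that $\cD$ is genuinely a smooth constant-rank distribution --- is exactly right and matches the paper's remark that this open set is dense in $\X$.
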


\medskip
In particular, given a dataset (e.g. MNIST),
we call {\it data leaf} a leaf of such foliation containing at least one point of
the dataset. In \cite{tronfioresi} the significance of such leaves is fully explored
from a geometrical and experimental point of view. More specifically, Thm. 3.6 in
\cite{tronfioresi} shows that the open set 
in Thm \ref{frob-thm} is dense in the dataspace $\X$.

\medskip
In the next sections we shall focus on the
geometry of this foliation in the data space.

\section{Cartan moving frames}
\label{sec:cartan-moving-frames}

In this section we first review some basic notions of Cartan's approach
to differentiable manifolds via the moving frames point of view and then
we see some concrete application to our setting. \revision{A more detailed introduction to the Cartan moving frames approach can be found in Appendix~\ref{app:cartan_moving_frames}.}


\medskip
\begin{defi} 
{\rm Let $E \to \cL$ be a  $\mathcal{C}^\infty$ vector bundle over
a smooth manifold $\cL$. A \emph{connection} on $E$ is a bilinear map:
\[
\nabla: \mathfrak{X}\pa{\cL} \times \Gamma\pa{E} \to \Gamma\pa{E}
\]
such that
\[
\nabla _{v}(fs)=df(v)s+f\nabla _{v}s, \qquad v \in \mathfrak{X}\pa{\cL},\,
f \in \cC^\infty(\cL), \, s \in \Gamma\pa{E}
\]
where $\mathfrak{X}\pa{\cL}$ are the vector fields on $\cL$ and  $\Gamma\pa{E}$
the sections of the vector bundle $E$.} 
\end{defi}

We shall be especially interested to the case, $E = \T\cL$ the tangent bundle to a leaf $\cL$ 
of a suitable foliation (Thm. \ref{frob-thm}). 
Our framework is naturally set up to take advantange of Cartan's language
of {\sl moving frames} \cite{tu2017differential}. Before we give this key definition,
let us assume our distribution $\cD$ as in (\ref{distr-def}) to be smooth, constant
rank and integrable. This assumption is reasonable, since it has been shown in~\cite{tronfioresi} that,
for neural networks with ReLU
non linearity, $\cD$ satisfies these hypotheses 
in a dense open subset of the dataspace. 

\medskip

\begin{definition}
We define the
\textit{data foliation} $\mathcal{F}_\D$, the foliation in the data space $\X$
defined by the distribution $\cD$ as in Equation~\ref{distr-def} and \textit{data leaf} a
leaf $\cL_x$ of $\cF$ containing at least one \textit{data point} i.e. a point of the data set
the network was trained with.
\end{definition}

\medskip
If $\ang{\cdot,\cdot}_D$ denotes the symmetric
bilinear form defined by the data information matrix
$D$ as in Def. \ref{dim-def} in the dataspace, by the very definitions, 
when we restrict ourselves to the tangent space to a leaf $\cL$ of $\cF$, we have
that $\ang{\cdot,\cdot}_{\cL}:=\ang{\cdot,\cdot}_D|_\cL$ is a non degenerate inner product
(Prop. \ref{thm:kerF} and Thm. \ref{frob-thm}).
Hence it defines a {\sl metric} denoted $g^D$.

\medskip
\begin{defi}
{\rm Let the notation be as above and let $\cL$ be a fixed leaf in $\cF_D$.  
At each point $x \in \cL$,
we define $\pa{e_k:=\sum_i\partial_{i} p_k(x)\partial_{i}}_{(k=1, \dots C-1)}$ 
to be a \textit{frame}
for $\T_x \cL$. The symmetric bilinear form  $g^D$ defines a Riemannian
metric on $\cL$, that we call {\it data metric}.}
\end{defi}

\medskip
Let $\nabla$ be the Levi-Civita connection with respect to the data information metric.

\medskip
\begin{defi}\label{def:connection_forms}
{\rm The Levi-Civita connection
\begin{equation}\label{lc-con}
				\nabla_X e_j = \sum_i \omega_j^i (X) e_i
\end{equation}
defines $\omega_j^i$, which are called the \emph{connection forms} and $\omega$
the \emph{connection matrix} relative to the frame $\pa{e_i}$.}
\end{defi}

\medskip
The Levi-Civita connection is explicitly given by (see \cite{tu2017differential} pg 46):
\begin{multline}
2 g^D\pa{\nabla_{e_a} e_b, e_c} = e_a\pa{g^D\pa{e_b, e_c}} + e_b \pa{g^D\pa{e_c, e_a}} - e_c\pa{g^D\pa{e_a, e_b}} \\
- g^D\pa{ e_a, \bra{e_b,e_c} } + g^D\pa{e_b, \bra{e_c,e_a}} + g^D\pa{e_c, \bra{e_a, e_b}}.
\end{multline} 


\medskip
To explicitly compute the connection forms $\omega_j^i(e_k)$ we need to define 
\begin{equation}
				C_{a,b,c} := g^D\pa{\nabla_{e_a} e_b, e_c}
\end{equation}
then we have by definition that
\[
				C_{a,b,c} = \sum_i \omega^i_b \pa{e_a} g^D\pa{e_i,e_c}
.\]

Define the matrix $\hat{D} = \pa{g^D\pa{e_i,e_j}}_{i,j=1,\ldots,C-1}$. This is the matrix of the
metric $g^D\pa{\cdot,\cdot}$ 
restricted to a leaf $\cL$ for the basis given by a frame.
We shall see an interesting significance for the matrix $\hat{D}$ in the
experiments in Section~\ref{sec:exp}.
Hence:

\[
				\sum_i \hat{D}_{l,i} \omega^i_j \pa{e_k}  = C_{k,j,l}
.\] 

This gives, in matrix notation: 
\begin{equation}
				\omega_j^i \pa{e_k} = \pa{\hat{D}^{-1} C_{k,j,\cdot}}_i.
\end{equation}

\medskip
The \textit{curvature tensor} $R$  is given by:
\[
R(X,Y)e_j = \sum_i \Omega_j^i (X,Y) e_i
\]
where $\Omega_j^i$ is a 2-form on $\T\X$, alternating and  $\D$-bilinear 
called the  \emph{curvature form (matrix)} of the connection $\nabla$ relative to the frame  $\pa{e_i}$ on  $\T\X$.

\medskip
To compute $R$ we shall make use of the following result found in 
\cite{tu2017differential} Theorem 11.1.

\medskip
\begin{prop}\label{curv-form}
The curvature form $\Omega$ is given by:
\[
\Omega^i_j \pa{X,Y}  = \pa{d\omega^i_j} \pa{X,Y} + \sum_k\omega^i_k\wedge\omega^k_j\pa{X,Y}
.\] 
\end{prop}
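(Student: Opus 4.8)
The plan is to derive the structural equation directly from the definitions of the connection and curvature forms, following the standard Cartan calculus. First I would recall that, by Definition~\ref{def:connection_forms}, the connection acts on the frame by $\nabla_X e_j = \sum_i \omega^i_j(X) e_i$, and that the curvature operator is defined by the usual formula $R(X,Y)e_j = \nabla_X\nabla_Y e_j - \nabla_Y\nabla_X e_j - \nabla_{[X,Y]}e_j$, with $R(X,Y)e_j = \sum_i \Omega^i_j(X,Y) e_i$. The strategy is then to substitute the connection forms into this expression and bookkeep carefully which terms produce the exterior derivative $d\omega^i_j$ and which produce the wedge product $\sum_k \omega^i_k \wedge \omega^k_j$.

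The key computation proceeds as follows. Applying $\nabla_Y$ to $e_j$ and then $\nabla_X$, I would use the Leibniz rule for the connection on the section $\sum_i \omega^i_j(Y) e_i$, giving a term $X\bigl(\omega^i_j(Y)\bigr)e_i$ from differentiating the coefficient function and a term $\omega^i_j(Y)\,\omega^k_i(X)\,e_k$ from the connection hitting $e_i$. Antisymmetrizing in $X,Y$ and subtracting the $\nabla_{[X,Y]}e_j$ contribution, the coefficient-derivative terms assemble into $X(\omega^i_j(Y)) - Y(\omega^i_j(X)) - \omega^i_j([X,Y])$, which is exactly $(d\omega^i_j)(X,Y)$ by the invariant formula for the exterior derivative of a $1$-form. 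The remaining terms, after relabeling summation indices, assemble into $\sum_k\bigl(\omega^i_k(X)\omega^k_j(Y) - \omega^i_k(Y)\omega^k_j(X)\bigr) = \sum_k (\omega^i_k\wedge\omega^k_j)(X,Y)$, using the convention for the wedge of two $1$-forms. Reading off the coefficient of $e_i$ then yields the claimed identity. One should also check that $\Omega^i_j$ is indeed an alternating $\cC^\infty(\cL)$-bilinear $2$-form, which follows since both $d\omega^i_j$ and the wedge terms are, and since tensoriality of $R$ in $X$ and $Y$ is standard.

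The main obstacle — more a matter of care than of depth — is the index and sign bookkeeping: one must fix once and for all the convention $(\alpha\wedge\beta)(X,Y) = \alpha(X)\beta(Y) - \alpha(Y)\beta(X)$ for $1$-forms and track the order of the lower/upper indices in the double sum $\omega^i_k(X)\,\omega^k_j(Y)$ so that the contraction index $k$ lands in the correct slot. Since this is precisely Theorem~11.1 of \cite{tu2017differential}, I would simply cite that reference for the detailed verification and present only the outline above, noting that the argument is purely local and frame-dependent and uses nothing about the specific form of $\cD$ or $g^D$ beyond the fact that $\nabla$ is a connection on $\T\cL$.
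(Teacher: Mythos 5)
Your proposal is correct: expanding $R(X,Y)e_j=\nabla_X\nabla_Y e_j-\nabla_Y\nabla_X e_j-\nabla_{[X,Y]}e_j$ via $\nabla_X e_j=\sum_i\omega^i_j(X)e_i$ and matching the resulting coefficients of $e_i$ against formulas (\ref{for1}) and (\ref{for2}) is exactly the standard derivation of the second structural equation. The paper itself gives no proof and simply cites Theorem~11.1 of \cite{tu2017differential}, which is the same source and the same argument you outline, so your write-up is consistent with (and slightly more explicit than) what the paper does.
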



\medskip
By using all the previous propositions and definitions, we will see in the following
sections how we can numerically compute
the curvature for a neural network with a softmax function on the output.

\medskip
It is useful to 
recall some formulae. If
$\alpha$ and $\beta$ are $\cC^\infty$ 1-forms and $X, Y$ are $\cC^\infty$ vector fields on a manifold, then 
\beq\label{for1}
\pa{\alpha \wedge \beta}\pa{X,Y} = \alpha(X)\beta(Y) - \alpha(Y)\beta(X)
\eeq 
and 
\beq\label{for2}
\pa{d\alpha}(X,Y) = X\alpha(Y) - Y\alpha(X) - \alpha \pa{\bra{X,Y}}
.\eeq

\section{The Curvature of the Data Leaves}
\label{sec:curv-of-data-leaves}

Computing in practice the curvature of a manifold with many dimensions is often intractable though essential
for many tasks. This is why we are interested in computing the curvature just for the data leaves.
\revision{In this section and the following one, we derive the curvature form calculations to obtain a formula linking curvature and first-order derivative of the network (i.e. the frame $e_k$), without double automatic differentiation or division by $p_i$ which might be close to zero, especially on data points, and could cause problems in practical implementation. Expressing the curvature with respect to the first-order derivative of the network can give information on the magnitude of the curvature coefficients when $N$ is almost constant in some region of the dataspace.}

Notice that, since in our frame $\pa{e_k:=\sum_i\partial_{i} p_k\pa{x}\partial_{i}}$, we have
$\sum_i \partial_{i} \ln p_k(x) \partial_{i} = \frac{1}{p_k(x)} \sum_i{\partial_{i} p_k(x) \partial_{i}} = \frac{1}{p_k(x)} e_k$, we may forget the logarithm in the computations
as it contributes only by a scalar factor.

Let $U = \pa{\sum_k\partial_{k} p_i \partial_{k} p_j}_{i,j}$ be the matrix of the dot products of the partial derivatives of
the probabilities. 

With this notation, if $P = \diag\set{p_k,~ k=1,\ldots,C}$, then 
\[
\hat{D}_{a,b} = \pa{U P^{-1} U}_{a,b}
.\]

Besides, if $J(p)  = \pa{\der{p_a}{x_i}}_{\substack{a=1,\ldots,C \\i=1,\ldots,d}}$ is the Jacobian matrix of first order derivatives, then $U = J(p)J(p)^T$.

Notice that $P$ might not be numerically invertible, whenever some classes are very unlikely and
thus with a probability close to zero. The goal of this section, and the following one, is thus to derive the computations of the curvature forms in a way that is numerically stable. This will allow us to implement the curvature forms computations on a computer.

\medskip
\begin{prop}\label{prop:lie_bracket_nablap_a}
Let the notation be as above. We have:
\begin{equation}
\bra{e_a, e_b} = \Hess{p_b} e_a - \Hess{p_a} e_b 
\end{equation} 
with $\Hess{f} = \pa{\partial_i\partial_j f}_{i,j=1,\ldots,d}$ the matrix of second order partial derivatives.
\end{prop}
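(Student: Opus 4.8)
The plan is to compute the Lie bracket directly in the ambient coordinates $x^1,\ldots,x^d$ of $\X$, using the standard formula for the bracket of two vector fields expressed in a coordinate frame. Write $e_a = \sum_i (\partial_i p_a)\,\partial_i$ and $e_b = \sum_j (\partial_j p_b)\,\partial_j$, so that the component vector of $e_a$ is $(\partial_i p_a)_i$ and that of $e_b$ is $(\partial_j p_b)_j$. Recall that for $X = \sum_i X^i \partial_i$ and $Y = \sum_j Y^j \partial_j$ one has
\[
[X, Y] = \sum_{i,j}\pa{X^i \partial_i Y^j - Y^i \partial_i X^j}\,\partial_j ,
\]
and substitute $X^i = \partial_i p_a$, $Y^j = \partial_j p_b$. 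This yields $[e_a,e_b] = \sum_{i,j}\pa{(\partial_i p_a)(\partial_i\partial_j p_b) - (\partial_i p_b)(\partial_i\partial_j p_a)}\partial_j$.

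Next I would identify each of the two resulting sums with a matrix–vector product. By symmetry of the second partial derivatives, $\sum_i (\partial_i p_a)(\partial_i\partial_j p_b)$ is the $j$-th component of $\Hess{p_b}$ applied to the column vector $(\partial_i p_a)_i$, i.e. to the component vector of $e_a$; hence the first sum represents the vector field $\Hess{p_b}\,e_a$, and likewise the second sum represents $\Hess{p_a}\,e_b$. Subtracting gives the claimed identity $[e_a,e_b] = \Hess{p_b} e_a - \Hess{p_a} e_b$.

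I expect no serious obstacle: the statement is a direct coordinate computation, and the only points requiring a word of care are (i) the consistent identification of the vector fields $e_k$ with their component vectors $(\partial_i p_k)_i$, so that the Hessian matrices $\Hess{p_a}$ and $\Hess{p_b}$ act on them by ordinary matrix multiplication, and (ii) the use of symmetry of the Hessian to recognize the contracted index pattern. It is worth noting that the two terms on the right-hand side need not individually be tangent to the leaf $\cL$; only their difference is, consistently with integrability of $\cD$ (Thm.~\ref{frob-thm}). Optionally one could push the computation further using \ref{H2}: since the score $s$ has vanishing Hessian, $\Hess{p_k}$ can be re-expressed purely through first derivatives of $s$ and the $p_\ell$; but this refinement is not needed here and is better deferred to the curvature computations of the next section.
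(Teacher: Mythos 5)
Your proof is correct and follows essentially the same route as the paper: both apply the coordinate formula $[X,Y]=\sum_{i,j}(X^i\partial_i Y^j - Y^i\partial_i X^j)\partial_j$ with $X^i=\partial_i p_a$, $Y^j=\partial_j p_b$ and then recognize the contracted sums as $\Hess{p_b}e_a$ and $\Hess{p_a}e_b$. The additional remarks on Hessian symmetry and on tangency of the difference are sound but not needed beyond what the paper's own computation already contains.
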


\begin{proof}
First, recall that $\bra{u^k\partial_k, v^l \partial_l} = \pa{u^k \der{v^l}{x^k} - v^k \der{u^l}{x^k} } \partial_l$. Thus, with $u^k = \der{p_a}{x^k}$ and $v^l = \der{p_b}{x^l}$ we get:
\begin{align*}
\bra{e_a, e_b} &= \pa{\sum_k \der{p_a}{x^k} \der{^2 p_b}{x^k\partial x^l} -
\sum_k\der{p_b}{x^k} \der{^2 p_a}{x^k\partial x^l} } \partial_l \\
&= \Hess{p_b} e_a - \Hess{p_a} e_b 
.\end{align*}
\end{proof}

\begin{prop}
If $s$ represent the score vector, i.e. the output of the neural network before going through
the softmax function then, for all $j=1,\ldots,C$,
\begin{equation}\label{eq:nabla_p_a}
\partial_{j} p_i = \sum_k p_i \pa{\delta_{ik} - p_k} \partial_{j} s_k 
\quad \text{and} \quad 
\partial_{j} \ln p_i = \sum_k \pa{\delta_{ik} - p_k} \partial_{j} s_k 
.\end{equation} 
In term of Jacobian matrices, this rewrites as
\[
J\pa{p} = \pa{P - pp^T} J\pa{s}
.\]
\end{prop}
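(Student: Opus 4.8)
The plan is to compute $\partial_j p_i$ directly from the definition $p_i = \mathrm{softmax}(s)_i = e^{s_i}/\sum_k e^{s_k}$ using the chain rule, and then repackage the result in matrix form. First I would write $p_i = e^{s_i} Z^{-1}$ where $Z = \sum_k e^{s_k}$, and differentiate with respect to $x_j$: this gives $\partial_j p_i = (\partial_j e^{s_i}) Z^{-1} + e^{s_i}\,\partial_j(Z^{-1})$. Since $\partial_j e^{s_i} = e^{s_i}\partial_j s_i$ and $\partial_j Z^{-1} = -Z^{-2}\sum_k e^{s_k}\partial_j s_k$, substituting yields $\partial_j p_i = p_i \partial_j s_i - p_i \sum_k p_k \partial_j s_k = \sum_k p_i(\delta_{ik} - p_k)\partial_j s_k$, which is the first identity. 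This is the standard softmax-Jacobian computation, so no real obstacle arises here; the only care needed is bookkeeping the two terms from the quotient rule.

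Next I would obtain the logarithmic version by dividing by $p_i$ (legitimate since $p_i>0$ by hypothesis \ref{H1} on the codomain of $N$), or equivalently by differentiating $\ln p_i = s_i - \ln Z$ directly: $\partial_j \ln p_i = \partial_j s_i - Z^{-1}\sum_k e^{s_k}\partial_j s_k = \partial_j s_i - \sum_k p_k \partial_j s_k = \sum_k(\delta_{ik}-p_k)\partial_j s_k$. This gives the second identity in \eqref{eq:nabla_p_a}.

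Finally I would translate the entrywise formula $\partial_j p_i = \sum_k p_i(\delta_{ik}-p_k)\partial_j s_k$ into matrix language. Reading $i$ as the row index and $j$ as the column index, the left side is $J(p)_{ij}$. On the right, $\sum_k p_i(\delta_{ik}-p_k)\partial_j s_k = p_i \partial_j s_i - p_i\sum_k p_k \partial_j s_k$; the first term is $\pa{P\, J(s)}_{ij}$ since $P=\diag\set{p_k}$, and the second term is $p_i\,(p^T J(s))_j = \pa{p\,p^T J(s)}_{ij}$. Hence $J(p) = P\,J(s) - p\,p^T J(s) = (P - pp^T)\,J(s)$, as claimed. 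There is no substantive obstacle in this proof; the only thing to watch is index placement when passing to matrix form, making sure the contraction over $k$ matches matrix multiplication and that $p$ is treated as a column vector so that $pp^T$ is the correct $C\times C$ outer product.
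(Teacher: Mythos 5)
Your proof is correct and follows essentially the same route as the paper: the paper's proof is a one-liner that quotes the standard softmax Jacobian $\partial_k\,\mathrm{softmax}(x)_i = \mathrm{softmax}(x)_i(\delta_{ik}-\mathrm{softmax}(x)_k)$ and composes it with $s$, whereas you additionally derive that formula from the quotient rule. The extra detail is harmless and everything checks out, including the passage to the matrix identity $J(p)=(P-pp^T)J(s)$.
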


\begin{proof}
This is simply due to the fact that $p = \mathrm{softmax}\pa{s}$ and that the derivative of the
softmax function is
$\der{\textrm{softmax}(x)_i}{x_k} = \textrm{softmax}(x)_i \pa{\delta_{i,k} - \textrm{softmax}(x)_k} $.
\end{proof}

We now give other formulae for the second order partial derivatives of the probabilities.

\medskip
\begin{prop}
Let the notation be as above. We have:
\begin{equation}\label{eq:H_p_a_simple}
\Hess{p_a}_{ij} =  \frac{\partial_{i} p_a \partial_{j} p_a}{p_a} - p_a \sum_{k=1}^C \partial_{i} p_k \partial_{j} s_k
.\end{equation}
\end{prop}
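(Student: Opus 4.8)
The plan is to start from the two formulas already proven and differentiate once more. We know from Equation~\eqref{eq:nabla_p_a} that $\partial_j p_a = \sum_k p_a(\delta_{ak} - p_k)\partial_j s_k$, and equivalently $\partial_j p_a = p_a \partial_j \ln p_a$ with $\partial_j \ln p_a = \sum_k (\delta_{ak}-p_k)\partial_j s_k$. I would compute $\Hess{p_a}_{ij} = \partial_i(\partial_j p_a)$ by applying $\partial_i$ to the product $p_a \cdot \partial_j \ln p_a$ via the Leibniz rule: this gives $(\partial_i p_a)(\partial_j \ln p_a) + p_a\, \partial_i \partial_j \ln p_a$. The first term is $(\partial_i p_a)(\partial_j p_a)/p_a$, which is exactly the first term on the right-hand side of \eqref{eq:H_p_a_simple}.

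It remains to identify $p_a\,\partial_i\partial_j \ln p_a$ with $-p_a \sum_k \partial_i p_k\,\partial_j s_k$, i.e. to show $\partial_i\partial_j \ln p_a = -\sum_k \partial_i p_k\, \partial_j s_k$. For this I would differentiate the expression $\partial_j \ln p_a = \sum_k (\delta_{ak} - p_k)\partial_j s_k$ with respect to $x_i$. Hypothesis \eqref{H2} ensures $\partial_i \partial_j s_k = 0$ (away from the nowhere dense set $A$), so the only surviving contribution comes from differentiating the coefficient $(\delta_{ak} - p_k)$, namely $-\partial_i p_k$; this yields $\partial_i \partial_j \ln p_a = -\sum_k (\partial_i p_k)\,\partial_j s_k$, which is precisely what is needed. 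Multiplying through by $p_a$ and adding the first term completes the identity.

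The only genuine subtlety — and the step I would be most careful about — is the use of \eqref{H2} to kill the $\partial_i\partial_j s_k$ term; this is exactly why the hypothesis is invoked here, and it should be flagged that the resulting identity holds on $\X \setminus A$. Everything else is a routine application of the product rule and the softmax derivative formula already established, so no further obstacle is expected.
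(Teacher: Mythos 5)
Your argument is correct and is essentially the paper's own proof: both apply the product rule to $\partial_j p_a = p_a\sum_k(\delta_{ak}-p_k)\partial_j s_k$, identify the first term as $\partial_i p_a\,\partial_j p_a/p_a$, and use \eqref{H2} to discard the $\partial_i\partial_j s_k$ contribution, leaving $-p_a\sum_k \partial_i p_k\,\partial_j s_k$. Writing the intermediate factor as $\partial_j\ln p_a$ is only a cosmetic difference.
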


\begin{proof}
\begin{align*}
\Hess{p_a}_{ij} &= \partial_{i}\pa{\partial_{j} p_a } = \partial_{i}\pa{\sum_k p_a \pa{\delta_{ik} - p_k} \partial_{j} s_k} \\
								&= \partial_{i} p_a \underbrace{\sum_k \pa{\delta_{ak} - p_k} \partial_{j} s_k}_{= \frac{1}{p_a} \partial_{j} p_a} + p_a \sum_k \partial_{i}\pa{\pa{\delta_{ak}-p_k} \partial_{j} s_k} \\
&= \frac{1}{p_a} \partial_{i} p_a \partial_{j}p_a + p_a \sum_k \pa{\delta_{ak} - p_k} \Hess{s_k} -
p_a \sum_k \partial_{i} p_k \partial_{j} s_k
.\end{align*}
But $\Hess{s_k} = 0$ almost everywhere by \ref{H2}.
\end{proof}

\medskip
However, with this form, the probability at the denominator will cause some instability problems,
whenever the network is sufficiently trained. Thus, we express $\Hess{p_a}$ in another form below.

\medskip
\begin{prop}
Let the notation be as above. We have:
\begin{align*}
\Hess{p_a}_{ij} &= \sum_k \bra{\pa{\delta_{ak}-p_k} \partial_{i} p_a - p_a \partial_{i} p_k}  \partial_{j} s_k \\
&= p_a \sum_k \bra{\pa{\delta_{a k} - p_k} \sum_l \pa{\delta_{a l} -p_l}\partial_{i} s_l - \partial_{i} p_k}  \partial_{j} s_k
.\end{align*}
\end{prop}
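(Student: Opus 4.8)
The plan is to obtain both identities by a single extra differentiation of the first-order formula \eqref{eq:nabla_p_a}, reusing that same formula to rewrite the leftover first-order factors. Concretely, starting from $\partial_{j} p_a = \sum_k p_a\pa{\delta_{ak}-p_k}\partial_{j} s_k$, I would apply $\partial_{i}$ and the Leibniz rule term by term. In each summand the contribution $p_a\pa{\delta_{ak}-p_k}\,\partial_{i}\partial_{j} s_k$ vanishes on $\X\setminus A$ by \ref{H2}, so only $\partial_{i}\bra{p_a\pa{\delta_{ak}-p_k}}\,\partial_{j} s_k$ remains. Expanding $\partial_{i}\bra{p_a\pa{\delta_{ak}-p_k}} = \pa{\delta_{ak}-p_k}\partial_{i} p_a - p_a\,\partial_{i} p_k$ and summing over $k$ gives exactly the first displayed expression. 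Equivalently, one may start from \eqref{eq:H_p_a_simple} and replace $\tfrac{1}{p_a}\partial_{j} p_a$ by $\sum_k\pa{\delta_{ak}-p_k}\partial_{j} s_k$ in its first term; the two terms then combine into the same sum over $k$.

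For the second identity I would substitute \eqref{eq:nabla_p_a} once more, now for the remaining factor $\partial_{i} p_a = \sum_l p_a\pa{\delta_{al}-p_l}\partial_{i} s_l$, inside the bracket $\pa{\delta_{ak}-p_k}\partial_{i} p_a - p_a\,\partial_{i} p_k$ just obtained. Since both terms carry a factor $p_a$, it can be pulled out, leaving $p_a\bra{\pa{\delta_{ak}-p_k}\sum_l\pa{\delta_{al}-p_l}\partial_{i} s_l - \partial_{i} p_k}$, and summing this against $\partial_{j} s_k$ over $k$ yields the claimed formula.

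The argument is entirely routine bookkeeping with indices and the softmax Jacobian; the only substantive point is the appeal to \ref{H2} to discard the $\Hess{s_k}$ terms, which is exactly what restricts the validity of these identities (as with the preceding propositions) to the dense open complement $\X\setminus A$ of the nowhere dense set $A$. I do not anticipate any genuine obstacle beyond keeping the summation indices $k$ and $l$ distinct when the first-order formula is applied twice.
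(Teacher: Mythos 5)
Your proposal is correct and matches the paper's argument: the paper proves this by substituting \eqref{eq:nabla_p_a} twice into \eqref{eq:H_p_a_simple}, which is exactly your ``equivalently'' route, and your primary route (differentiating \eqref{eq:nabla_p_a} directly and discarding $\Hess{s_k}$ via \ref{H2}) is just the same computation unwound one step earlier. The index bookkeeping and the appeal to \ref{H2} are handled correctly, so there is nothing to add.
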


\begin{proof}
The proof is a straightforward combination of~\autoref{eq:nabla_p_a} and \autoref{eq:H_p_a_simple}.
We simply need to replace $\partial_{i} p_a$ twice in the expression~\ref{eq:H_p_a_simple} by~\ref{eq:nabla_p_a}.
\end{proof}

\begin{lemma}
    \begin{equation}
		e_a \pa{D}(x) = J(s)^T A_a J(s)
    \end{equation}
    with 
    \[
        \pa{A_a}_{kl} = \sum_i \partial_{i} p_a \pa{\pa{\delta_{kl}-p_l}\partial_{i} p_k - p_k \partial_{i} p_l}
    \]
\end{lemma}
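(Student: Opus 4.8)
The plan is to compute $e_a(D)(x)$ entrywise, using $e_a = \sum_i \partial_i p_a \, \partial_i$ as a derivation acting on the scalar functions $D_{jl}(x)$. Recall from the excerpt that
\[
D_{jl}(x) = \sum_k \frac{1}{p_k} \partial_j p_k \, \partial_l p_k,
\]
and that, by \autoref{eq:nabla_p_a}, we have $J(p) = (P - pp^T)J(s)$, i.e. $\partial_j p_k = \sum_m p_k(\delta_{km} - p_m)\partial_j s_m = p_k \partial_j \ln p_k$. My first step will be to rewrite $D$ in terms of the score $s$: substituting $\partial_j p_k = p_k \partial_j \ln p_k$ gives
\[
D_{jl} = \sum_k p_k \, \partial_j \ln p_k \, \partial_l \ln p_k = \bigl(J(s)^T (P - pp^T) J(s)\bigr)_{jl},
\]
where I have used $\sum_k p_k (\delta_{km}-p_m)(\delta_{kn}-p_n) = (P - pp^T)_{mn}$. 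So $D = J(s)^T (P - pp^T) J(s)$.

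The second step is to apply $e_a$ to this expression. Since $s$ has vanishing second derivatives on $\X \setminus A$ by \ref{H2}, the operator $e_a$ annihilates $J(s)$ (each entry $\partial_j s_m$ satisfies $e_a(\partial_j s_m) = \sum_i \partial_i p_a \, \partial_i \partial_j s_m = 0$). Therefore, by the Leibniz rule,
\[
e_a(D) = J(s)^T \, e_a(P - pp^T) \, J(s),
\]
and it remains to identify $A_a := e_a(P - pp^T)$, a $C \times C$ matrix of scalar functions. Computing entrywise, $(P - pp^T)_{kl} = p_k \delta_{kl} - p_k p_l$, so
\[
(A_a)_{kl} = e_a(p_k \delta_{kl} - p_k p_l) = \delta_{kl}\, e_a(p_k) - e_a(p_k)\, p_l - p_k\, e_a(p_l).
\]
Using $e_a(p_k) = \sum_i \partial_i p_a \, \partial_i p_k$ and collecting terms gives
\[
(A_a)_{kl} = \sum_i \partial_i p_a \bigl((\delta_{kl} - p_l)\partial_i p_k - p_k \partial_i p_l\bigr),
\]
which is exactly the claimed formula. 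Assembling the three steps yields $e_a(D)(x) = J(s)^T A_a J(s)$.

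I do not expect a genuine obstacle here: the argument is a direct computation. The one point requiring a little care is the use of \ref{H2} to kill $e_a(J(s))$ — this only holds on $\X \setminus A$, so the identity is an identity of functions on that dense open set, which is all that is needed since $D$, the $e_a$, and $A_a$ are all continuous (indeed smooth) there. A secondary bookkeeping point is making sure the symmetrization in $e_a(p_k p_l) = e_a(p_k)p_l + p_k e_a(p_l)$ is tracked correctly when matching against the asymmetric-looking formula for $(A_a)_{kl}$; writing $\delta_{kl} - p_l$ rather than symmetrizing is a deliberate (and valid) choice once one observes that $J(s)^T A_a J(s)$ only sees the symmetric part of $A_a$ anyway.
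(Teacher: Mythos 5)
Your proof is correct and follows essentially the same route as the paper: both rewrite $D = J(s)^T(P - pp^T)J(s)$, use \ref{H2} to annihilate the derivatives of $J(s)$, and compute $\partial_i(P-pp^T)_{kl}$ entrywise to identify $A_a$. The only difference is that you derive the identity $D = J(s)^T(P-pp^T)J(s)$ explicitly, whereas the paper takes it as given from the preceding discussion.
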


\begin{proof}
\begin{align*}
				e_a \pa{D}(x) &= \sum_i \partial_i p_a \partial_i D(x) \\
                  &= \sum_i \partial_i p_a \partial_i \pa{J(s)^T \pa{P-pp^T}J(s)} \\
                  &= \sum_i \partial_i p_a J(s)^T \partial_i\pa{P-pp^T} J(s) \quad \text{because } \Hess{s_k} = 0 
    .\end{align*}
    Indeed, $\Hess{s_k} = 0$ for all $k$ almost everywhere by \ref{H2}. Then at the indexes $k,l$, we get:
    \begin{align*}
        \partial_i \pa{P-pp^T}_{kl} &= \partial_i \pa{p_k \delta_{kl} - p_k p_l} \\
                      &= \delta_{kl}\partial_i p_k - p_l \partial_i p_k - p_k \partial_i p_l
    .\end{align*}
    Thus, by multiplying with $\partial_i p_a$ and summing over $i$, it gives the expression of $A_a$.
\end{proof}

\begin{prop}\label{prop:e_g_e_e}
    Recall that the neural network satisfies \ref{H2}, then
\begin{align*}
    e_a \pa{g^D\pa{ e_b, e_c }} &= M_{a,c,b} + M_{a,b,c} + \pa{J(s) e_b}^T A_a J(s) e_c   
    \intertext{with}
    M_{a,b,c} &= \pa{\Hess{p_b} e_a}^T J(s)^T \pa{P - pp^T} J(s)e_c
.\end{align*}
\end{prop}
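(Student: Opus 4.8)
The plan is to expand $e_a\bigl(g^D(e_b,e_c)\bigr)$ by writing $g^D(e_b,e_c) = e_b^T D(x)\, e_c$ where here $e_b$ denotes the coordinate vector $\bigl(\partial_i p_b\bigr)_i$ and $D(x) = J(s)^T(P-pp^T)J(s)$ is the data information matrix from the earlier identity $\hat D = UP^{-1}U$ together with $J(p) = (P-pp^T)J(s)$. Applying the derivation $e_a = \sum_i \partial_i p_a\,\partial_i$ via the Leibniz rule produces three terms: one where $e_a$ hits the left factor $e_b$, one where it hits $D(x)$, and one where it hits the right factor $e_c$.

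First I would handle the two "outer" terms. When $e_a$ differentiates the coordinate vector $e_b = (\partial_i p_b)_i$, we get $\bigl(\sum_k \partial_k p_a\,\partial_k\partial_i p_b\bigr)_i = \Hess{p_b}\,e_a$ by definition of the Hessian and of $e_a$; this is exactly the vector appearing in Proposition~\ref{prop:lie_bracket_nablap_a}. So the term where $e_a$ hits the left factor is $\bigl(\Hess{p_b}\,e_a\bigr)^T D(x)\,e_c$, and substituting $D(x) = J(s)^T(P-pp^T)J(s)$ gives precisely $M_{a,b,c}$. Symmetrically, the term where $e_a$ hits the right factor $e_c$ is $e_b^T D(x)\bigl(\Hess{p_c}\,e_a\bigr) = \bigl(\Hess{p_c}\,e_a\bigr)^T D(x)\,e_b = M_{a,c,b}$, using symmetry of $D(x)$. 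This already accounts for the $M_{a,c,b} + M_{a,b,c}$ part of the claimed formula.

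Next I would treat the middle term $e_b^T\,\bigl(e_a(D)(x)\bigr)\,e_c$. This is where assumption \ref{H2} enters: by the preceding Lemma, $e_a(D)(x) = J(s)^T A_a J(s)$, since the terms involving $\Hess{s_k}$ vanish almost everywhere. Substituting, the middle term becomes $e_b^T J(s)^T A_a J(s) e_c$. Now I need to rewrite $J(s)\,e_b$ — but $e_b$ here is the coordinate vector $(\partial_i p_b)_i = J(p)^T$ applied appropriately, and using $J(p) = (P-pp^T)J(s)$ one identifies $\sum_i \partial_i p_b\,(\text{row }i\text{ of }J(s))$ with $J(s)\,e_b$ in the notation of the statement; so the middle term is $\bigl(J(s)e_b\bigr)^T A_a\,\bigl(J(s)e_c\bigr)$, which is the third summand of the claimed formula. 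Adding the three contributions yields the result.

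The main obstacle is purely notational bookkeeping: the symbol $e_b$ is being used both as an abstract frame vector in $\T\cL$ and as its coordinate column $(\partial_i p_b)_i$, and one must be careful that "$J(s)e_b$" in the statement means $J(s)$ times that coordinate column. Once the conventions are pinned down, every step is a direct application of Leibniz, the definition of $\Hess{p_b}e_a$, the factorization $D = J(s)^T(P-pp^T)J(s)$, symmetry of $D$, and the Lemma computing $e_a(D)$; the only genuine input is \ref{H2}, which kills the second-derivative-of-score terms so that $e_a(D)$ has the stated closed form.
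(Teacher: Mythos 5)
Your proposal is correct and follows essentially the same route as the paper: expand $e_a\pa{e_b^T D(x) e_c}$ by the Leibniz rule, identify the two outer terms as $M_{a,b,c}$ and $M_{a,c,b}$ via $\Hess{p_b}e_a$ and the symmetry of $D(x)$, and handle the middle term with the lemma $e_a\pa{D}(x)=J(s)^T A_a J(s)$, which is where \ref{H2} enters. The only difference is cosmetic: the identity $\pa{J(s)e_b}^T A_a J(s)e_c = e_b^T J(s)^T A_a J(s) e_c$ is plain transpose algebra and needs no appeal to $J(p)=(P-pp^T)J(s)$.
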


\begin{proof}
    The proof is straightforward.
    \begin{align*}
        e_a\pa{g^D\pa{e_b, e_c}} &= \sum_i \partial_i p_a \partial_i \pa{e_b^T D(x) e_c} \\
                            &= \sum_i \partial_i p_a \partial_i \pa{e_b^T }D(x) e_c + \sum_i \partial_i p_a e_b^T \partial_i \pa{D(x)} e_c \\
                            & \qquad+ \sum_i \partial_i p_a e_b^T D(x) \partial_i e_c \\
                            &= e_a^T \Hess{p_b}^T D(x) e_c + e_b^T e_a \pa{D(x)} e_c + e_b^T D(x) \Hess{p_c} e_a \\
                            &= M_{a,b,c} + e_b^T J(s)^T A_a J(s) e_c + \pa{e_a^T \Hess{p_c}^T D(x) e_b}^T\\ 
                            &= M_{a,b,c} + e_b^T J(s)^T A_a J(s) e_c + \pa{M_{a,c,b}}^T 
    .\end{align*}
    And $M_{a,c,b}$ is a scalar, thus $M_{a,c,b}^T = M_{a,c,b}$, hence the result.
\end{proof}

Then, we use the following proposition to remove the second order derivative in $M$. An alternative formula for this expression is given in~\ref{prop:derivationscalarproduct}. This facilitates the computations with automatic differentiation methods.

\begin{prop}
    \begin{equation}
        \Hess{p_a} e_b = \pa{\sum_{k,i} \pa{\delta_{ak} - p_k}\pa{\partial_{i} s_k \partial_{i} p_b}} e_a - \sum_{k,i} p_a \pa{\partial_{i} s_k \partial_{i} p_b} e_k
    .\end{equation}
\end{prop}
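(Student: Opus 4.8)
The plan is to combine the formula $\partial_j p_a = \sum_k p_a(\delta_{ak}-p_k)\partial_j s_k$ from Equation~\ref{eq:nabla_p_a} with the simplified Hessian expression from Equation~\ref{eq:H_p_a_simple}, and then contract against the vector $e_b$. Writing $\Hess{p_a}e_b$ means forming the vector with $l$-th component $\sum_j \Hess{p_a}_{lj}(e_b)_j = \sum_j \Hess{p_a}_{lj}\partial_j p_b$, so the goal is to sum the identity in~\ref{eq:H_p_a_simple} against $\partial_j p_b$ over the index $j$.

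First I would start from
\[
\Hess{p_a}_{ij} = \frac{\partial_i p_a\,\partial_j p_a}{p_a} - p_a\sum_k \partial_i p_k\,\partial_j s_k,
\]
multiply by $\partial_j p_b$, and sum over $j$. The first term gives $\frac{\partial_i p_a}{p_a}\sum_j \partial_j p_a\,\partial_j p_b$; the second gives $-p_a\sum_{k}\partial_i p_k \sum_j \partial_j s_k\,\partial_j p_b$. The second term is already in the desired shape: it is $-p_a\sum_{k,j}(\partial_j s_k\,\partial_j p_b)\,\partial_i p_k$, i.e. the $i$-th component of $-\sum_{k,j}p_a(\partial_j s_k\,\partial_j p_b)\,e_k$. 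For the first term I would rewrite $\frac{1}{p_a}\partial_j p_a$ using the second identity in~\ref{eq:nabla_p_a}, namely $\frac{1}{p_a}\partial_j p_a = \partial_j \ln p_a = \sum_k(\delta_{ak}-p_k)\partial_j s_k$, so that $\frac{1}{p_a}\sum_j \partial_j p_a\,\partial_j p_b = \sum_{k,j}(\delta_{ak}-p_k)(\partial_j s_k\,\partial_j p_b)$. Hence the first term becomes $\big(\sum_{k,j}(\delta_{ak}-p_k)(\partial_j s_k\,\partial_j p_b)\big)\partial_i p_a$, i.e. the $i$-th component of $\big(\sum_{k,i}(\delta_{ak}-p_k)(\partial_i s_k\,\partial_i p_b)\big)e_a$ after relabeling the summation index. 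Assembling the two pieces yields exactly the claimed identity.

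There is essentially no serious obstacle here: the proof is a bookkeeping exercise in index manipulation, and the only points to be careful about are (i) keeping the free index (the component index of the output vector) distinct from the contracted index carried by $e_b$, and (ii) invoking $\Hess{s_k}=0$ implicitly — in fact it is already baked into the form of~\ref{eq:H_p_a_simple}, which was derived using \ref{H2}, so nothing extra is needed. The mild subtlety worth flagging is the renaming of the dummy index from $j$ to $i$ in the final statement, which matches the notation $\sum_{k,i}$ appearing in the proposition; one should note that $i$ there is a summation index, not the component index of the resulting vector.

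\begin{proof}
By~\autoref{eq:H_p_a_simple}, $\Hess{p_a}_{lj} = \frac{1}{p_a}\partial_l p_a\,\partial_j p_a - p_a\sum_k \partial_l p_k\,\partial_j s_k$. Contracting with $e_b$, the $l$-th component of $\Hess{p_a}e_b$ is
\[
\sum_j \Hess{p_a}_{lj}\,\partial_j p_b = \frac{\partial_l p_a}{p_a}\sum_j \partial_j p_a\,\partial_j p_b - p_a\sum_k \partial_l p_k \sum_j \partial_j s_k\,\partial_j p_b.
\]
Using the second identity in~\autoref{eq:nabla_p_a}, $\frac{1}{p_a}\partial_j p_a = \sum_k(\delta_{ak}-p_k)\partial_j s_k$, so
\[
\frac{1}{p_a}\sum_j \partial_j p_a\,\partial_j p_b = \sum_{k,j}(\delta_{ak}-p_k)\,\partial_j s_k\,\partial_j p_b.
\]
Therefore the first term equals $\big(\sum_{k,j}(\delta_{ak}-p_k)\,\partial_j s_k\,\partial_j p_b\big)\,\partial_l p_a$, the $l$-th component of $\big(\sum_{k,i}(\delta_{ak}-p_k)(\partial_i s_k\,\partial_i p_b)\big)e_a$ after relabeling $j$ as $i$, while the second term is the $l$-th component of $-\sum_{k,i}p_a(\partial_i s_k\,\partial_i p_b)\,e_k$. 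Adding the two pieces gives the stated formula.
\end{proof}
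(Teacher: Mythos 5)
Your proof is correct and follows essentially the same route as the paper: the paper first substitutes Equation~(\ref{eq:nabla_p_a}) into Equation~(\ref{eq:H_p_a_simple}) to get the division-free Hessian formula and then contracts against $e_b$, whereas you contract first and substitute $\tfrac{1}{p_a}\partial_j p_a=\sum_k(\delta_{ak}-p_k)\partial_j s_k$ afterwards — the same two ingredients in a different order. The index bookkeeping, including the relabeling of the dummy index, checks out.
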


\begin{proof}
    \begin{align*}
		\Hess{p_a} e_b &= \sum_i \pa{ \sum_k \pa{\pa{\delta_{ak} - p_k}\partial_{j} p_a  - p_a \partial_{j} p_k } \partial_{i} s_k } \partial_{i} p_b \\
								 &= \sum_i \pa{ \sum_k \pa{\delta_{ak} - p_k}\partial_{j} p_a \partial_{i} s_k} \partial_{i} p_b - p_a \sum_{i}\pa{\sum_k \partial_{j} p_k \partial_{i} s_k} \partial_{i} p_b \\
                   &=  \pa{\sum_{k,i} \pa{\delta_{ak} - p_k}\partial_{i} s_k \partial_{i} p_b} \partial_{j} p_a  - p_a \sum_{k,i}  \pa{\partial_{i} s_k \partial_{i} p_b} \partial_{j} p_k 
    .\end{align*}
\end{proof}

\begin{prop}
				\begin{equation}
								g^D\pa{e_a, \bra{e_b , e_c}} = \sum_{i,j,k}\partial_{i} p_a D(x)_{ij} \pa{\partial_{j}\partial_{k}{p_c}\partial_{k} p_b - \partial_{j}\partial_{k}{p_b}\partial_{k} p_c}
				.\end{equation}

\end{prop}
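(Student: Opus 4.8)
The plan is to reduce everything to Proposition~\ref{prop:lie_bracket_nablap_a}, which already gives the Lie bracket in closed form, and then simply to unwind the definition of $g^D$ in the ambient coordinates $x^i$. First I would invoke bilinearity of $g^D$ together with the identity $\bra{e_b, e_c} = \Hess{p_c} e_b - \Hess{p_b} e_c$ from Proposition~\ref{prop:lie_bracket_nablap_a}, so that
\[
g^D\pa{e_a, \bra{e_b, e_c}} = g^D\pa{e_a, \Hess{p_c} e_b} - g^D\pa{e_a, \Hess{p_b} e_c}.
\]

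Next I would recall that the frame vector $e_k = \sum_i \partial_{i} p_k\, \partial_{i}$ is represented, in the coordinates $x^i$, by the column vector with entries $\pa{\partial_{i} p_k}_i$, and that $g^D$ is by definition the restriction to the leaf of the bilinear form attached to $D$, so that for tangent vectors written as such column vectors $u,v$ one has $g^D(u,v) = u^T D(x)\, v$. Viewing $\Hess{p_c}$ as the $d\times d$ matrix $\pa{\partial_{j}\partial_{k} p_c}_{j,k}$ acting on the column vector $e_b = \pa{\partial_{k} p_b}_k$, the vector $\Hess{p_c} e_b$ has $j$-th component $\sum_k \partial_{j}\partial_{k} p_c\, \partial_{k} p_b$; hence
\[
g^D\pa{e_a, \Hess{p_c} e_b} = \sum_{i,j} \partial_{i} p_a \, D(x)_{ij} \sum_k \partial_{j}\partial_{k} p_c \, \partial_{k} p_b .
\]
Writing the same identity with $b$ and $c$ interchanged and subtracting yields exactly the claimed formula.

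I do not expect any genuine obstacle here: the mathematical content is entirely contained in Proposition~\ref{prop:lie_bracket_nablap_a} (whose proof used only the general bracket formula for coordinate vector fields), and the rest is index bookkeeping. The two minor points worth stating explicitly are that $D(x)$ is symmetric — which is what makes $g^D\pa{e_a, \Hess{p_c} e_b}$ unambiguous, whether one reads it as $e_a^T D(x)\,\Hess{p_c} e_b$ or as $\pa{\Hess{p_c} e_b}^T D(x)\, e_a$ — and that $\Hess{p_c}$ is itself symmetric, so no transpose is introduced when it acts on $e_b$. Matching the contraction indices so that $D(x)_{ij}$ pairs the $a$-slot $\partial_{i} p_a$ with the Hessian slot $\partial_{j}\partial_{k} p_c$ then gives the stated expression verbatim.
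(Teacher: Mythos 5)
Your argument is correct and is exactly the route the paper intends: the paper's proof is simply ``straightforward from the previous propositions,'' meaning precisely the combination of Proposition~\ref{prop:lie_bracket_nablap_a} with the ambient matrix expression $g^D(u,v)=u^T D(x)\,v$ that you carry out. The index bookkeeping checks out and reproduces the stated formula verbatim.
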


\begin{proof}
	Straightforward from the previous propositions and lemmas.
\end{proof}

With all these propositions, the connection forms can be computed directly without numerical instabilities.  

\section{Computation of the curvature forms}
\label{sec:computation-curv-forms}

In this section we conclude our calculation of the curvature forms.
In Prop. \ref{curv-form} we wrote the explicit expression for the curvature form as:
\begin{equation}\label{eq-conn-form}
\Omega \pa{X,Y}  = \pa{d\omega} \pa{X,Y} + \omega\wedge\omega\pa{X,Y}
.\end{equation}
where $\omega$ denotes the (Levi-Civita) connection form. To ease the reading we go back to notation of Section~\ref{sec:cartan-moving-frames} and we set:
\[
e_k:=\sum_i\partial_{i} p_k\pa{x} \partial_{i}, \quad \hbox{for} \quad k=1, \dots, C-1%
.\]
We then can express explicitly the connection form as:
\[
				\nabla_X e_j = \sum \omega_j^i (X) e_i
.\]
The wedge product of the connection forms in (\ref{eq-conn-form}) can thus be easily computed with the propositions
of the previous section, because of formula (\ref{for1}) report here: 
\beq\label{f}
\pa{\omega \wedge \omega}\pa{X,Y} = \omega(X)\omega(Y) - \omega(Y)\omega(X)
.\eeq 

The exterior derivative of $\omega$ remains to be computed and it is more complicated. We recall the formula (\ref{for2}):
\beq
\pa{d\omega}(X,Y) = X\omega(Y) - Y\omega(X) - \omega \pa{\bra{X,Y}}
.\eeq

The last term is computed via the following proposition.

\medskip
\begin{prop}\label{prop:omega_bra}
Let the notation be as above. Then:
    \begin{align*}
        \omega_j^i\pa{\bra{e_a,e_b}} = & \pa{\sum_{k,l} \pa{\delta_{bk} - p_k}\pa{\partial_{l} s_k \partial_{l} p_a}} \omega_j^i \pa{e_b}\\
                        & - \pa{\sum_{k,l} \pa{\delta_{ak} - p_k}\pa{\partial_{l} s_k \partial_{l} p_b}} \omega_j^i \pa{e_a}\\ 
                        & - \sum_{k,l} \pa{\partial_{l} s_k \pa{ p_b \partial_{l} p_a - p_a \partial_{l} p_b}} \omega_j^i \pa{e_k}
    .\end{align*}
\end{prop}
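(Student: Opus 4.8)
The plan is to reduce the statement to two identities already established: the Lie-bracket formula $\bra{e_a,e_b} = \Hess{p_b}e_a - \Hess{p_a}e_b$ of Proposition~\ref{prop:lie_bracket_nablap_a}, and the expansion of $\Hess{p_a}e_b$ in the frame $\pa{e_k}$ proved just above. First I would substitute the latter into the former. Using
\[
\Hess{p_a}e_b = \pa{\sum_{k,l}\pa{\delta_{ak}-p_k}\partial_{l} s_k\,\partial_{l} p_b}e_a - \sum_{k,l}p_a\pa{\partial_{l} s_k\,\partial_{l} p_b}e_k
\]
together with the companion formula obtained by exchanging $a$ and $b$, and then collecting the two $e_k$-sums of the shape $\sum_{k,l}p_\bullet\pa{\partial_{l} s_k\,\partial_{l} p_\bullet}e_k$ into a single one, I obtain
\begin{align*}
\bra{e_a,e_b} &= \pa{\sum_{k,l}\pa{\delta_{bk}-p_k}\partial_{l} s_k\,\partial_{l} p_a}e_b - \pa{\sum_{k,l}\pa{\delta_{ak}-p_k}\partial_{l} s_k\,\partial_{l} p_b}e_a \\
&\quad - \sum_{k,l}\partial_{l} s_k\pa{p_b\,\partial_{l} p_a - p_a\,\partial_{l} p_b}\,e_k .
\end{align*}

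Next I would apply the $1$-form $\omega_j^i$ to both sides. The crucial point is that $\omega_j^i$ is a connection form, hence tensorial, i.e. $\cC^\infty(\cL)$-linear in its vector-field slot: the scalar coefficients $\sum_{k,l}\pa{\delta_{bk}-p_k}\partial_{l} s_k\,\partial_{l} p_a$, $\sum_{k,l}\pa{\delta_{ak}-p_k}\partial_{l} s_k\,\partial_{l} p_b$ and $\sum_{k,l}\partial_{l} s_k\pa{p_b\,\partial_{l} p_a - p_a\,\partial_{l} p_b}$ are smooth functions of $x$, so they factor out of $\omega_j^i$. Distributing $\omega_j^i$ over the three terms then yields exactly the claimed expression in terms of $\omega_j^i(e_a)$, $\omega_j^i(e_b)$ and $\omega_j^i(e_k)$. (One uses here that $\cD$ is integrable, so $\bra{e_a,e_b}$ is again tangent to the leaf and $\omega_j^i$ can legitimately be applied to it.)

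The only points requiring mild care are bookkeeping: tracking the signs when merging the two $e_k$-sums, and the range of the index $k$ in the $e_k$-term. Because $\sum_k p_k = 1$ forces $\sum_k \partial_{l} p_k = 0$ for every $l$, one has $e_C = -\sum_{k=1}^{C-1}e_k$, so a sum running over $k=1,\dots,C$ may be rewritten over the frame indices $k=1,\dots,C-1$ without changing its value; this is what allows the final formula to be phrased purely in terms of the frame $\pa{e_k}_{k=1,\dots,C-1}$. I expect no genuine obstacle: the proposition follows directly from the two cited results and the $\cC^\infty$-linearity of the connection forms $\omega_j^i$.
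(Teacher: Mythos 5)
Your proof is correct and follows essentially the same route as the paper's: apply $\omega_j^i$ to the bracket formula of Proposition~\ref{prop:lie_bracket_nablap_a}, expand $\Hess{p_a}e_b$ (and its $a\leftrightarrow b$ counterpart) in the frame via the preceding proposition, and conclude by the $\cC^\infty$-linearity of the connection forms. Your additional remarks on the tensoriality of $\omega_j^i$ and on reducing the index range $k=1,\dots,C$ to the frame indices via $\sum_k \partial_l p_k = 0$ are correct refinements of points the paper leaves implicit.
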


\begin{proof}
    \begin{align*}
        \omega_j^i \pa{\bra{e_a, e_b}} &= \omega_j^i \pa{\Hess{p_b}e_a} - \omega_j^i \pa{\Hess{p_a}e_b}
    .\end{align*}
    Besides,
    \begin{align*}
        \omega_j^i\pa{\Hess{p_a}e_b} &= \omega_j^i \pa{\pa{\sum_{k,l} \pa{\delta_{ak} - p_k}\pa{\partial_{l} s_k \partial_{l} p_b}} e_a - \sum_{k,l} p_a \pa{\partial_{l} s_k \partial_{l} p_b} e_k
} \\
                          &= \pa{\sum_{k,l} \pa{\delta_{ak} - p_k}\pa{\partial_{l} s_k \partial_{l} p_b}} \omega_j^i \pa{e_a} - \sum_{k,l} p_a \pa{\partial_{l} s_k \partial_{l} p_b} \omega_j^i \pa{e_k}
    .\end{align*}
    Thus,
    \begin{align*}
        \omega_j^i\pa{\bra{e_a,e_b}} = & \pa{\sum_{k,l} \pa{\delta_{bk} - p_k}\pa{\partial_{l} s_k \partial_{l} p_a}} \omega_j^i \pa{e_b}\\
                        & - \pa{\sum_{k,l} \pa{\delta_{ak} - p_k}\pa{\partial_{l} s_k \partial_{l} p_b}} \omega_j^i \pa{e_a}\\ 
                        & - \sum_{k,l} \pa{\partial_{l} s_k \pa{ p_b \partial_{l} p_a - p_a \partial_{l} p_b}} \omega_j^i \pa{e_k}
    .\end{align*}
\end{proof}

We now tackle the question of determining the first two terms in (\ref{for2}).

\medskip
\begin{observation}\label{prop:e_omega_e}
We notice that to compute $e_a \pa{\omega^i_j \pa{e_b}}$, we can use the fact that: 
\begin{align*}
    e_a \pa{g^D\pa{\nabla_{e_b} e_c,e_d}} &= e_a \pa{\sum_i \omega_c^i \pa{e_b} g^D\pa{e_i, e_d}} \\
                      &= \sum_i e_a \pa{\omega_c^i\pa{e_b}}g^D\pa{e_i,e_d} + \sum_i \omega_c^i\pa{e_b} e_a\pa{g^D\pa{e_i, e_d}} \\
    \iff \sum_i \hat{D}_{d,i} e_a \pa{\omega_c^i \pa{e_b}} &= \underbrace{e_a \pa{g^D\pa{\nabla_{e_b} e_c, e_d}} - \sum_i \omega_c^i\pa{e_b}e_a\pa{g^D\pa{e_i,e_d}}}_{N_{a,b,c,d}} \\
    \iff  e_a \pa{\omega_c^\cdot\pa{e_b}} &= \hat{D}^{-1} N_{a,b,c,\cdot}
.\end{align*}
Hence we need to compute:
$e_a g^D\pa{\nabla_{e_b} e_c, e_d}$.
\end{observation}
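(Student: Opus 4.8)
The plan is purely algebraic: the identity is a bookkeeping consequence of the defining relation for the connection forms together with the Leibniz rule for a vector field acting on a product of smooth scalar functions, plus invertibility of $\hat{D}$. First I would substitute $\nabla_{e_b} e_c = \sum_i \omega_c^i(e_b)\, e_i$ from Definition~\ref{def:connection_forms} and use bilinearity of $g^D$ in the second slot to get $g^D(\nabla_{e_b} e_c, e_d) = \sum_i \omega_c^i(e_b)\, \hat{D}_{i,d}$, where $\hat{D}_{i,d} = g^D(e_i,e_d)$ and $\omega_c^i(e_b)$ are all genuine $\cC^\infty$ functions on the leaf $\cL$ (the latter being the $i$-th component, in the frame $(e_i)$, of the field $\nabla_{e_b} e_c$).

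Next I would apply $e_a$, which acts as a derivation on $\cC^\infty(\cL)$, distributing it over the finite sum and the products:
\[
e_a\big(g^D(\nabla_{e_b} e_c, e_d)\big) = \sum_i e_a\big(\omega_c^i(e_b)\big)\, \hat{D}_{i,d} + \sum_i \omega_c^i(e_b)\, e_a\big(g^D(e_i,e_d)\big),
\]
using $e_a(\hat{D}_{i,d}) = e_a(g^D(e_i,e_d))$ by the very definition of $\hat{D}$. Invoking the symmetry $\hat{D}_{i,d} = \hat{D}_{d,i}$ of the metric matrix, moving the second sum to the left-hand side, and naming the resulting right-hand side $N_{a,b,c,d}$, this is precisely $\sum_i \hat{D}_{d,i}\, e_a(\omega_c^i(e_b)) = N_{a,b,c,d}$.

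Finally I would use that $\hat{D}$ is invertible: by Theorem~\ref{thm:kerF} one has $\ker D(x) = (\Span_k\{e_k\})^{\perp}$ for the Euclidean product, and by Theorem~\ref{frob-thm} $\cD_x = \Span_k\{e_k\} = \T_x\cL$, so the restriction of $g^D$ to $\T_x\cL$ is a genuine (positive definite) inner product and $\hat{D}$ is an invertible $(C-1)\times(C-1)$ matrix wherever the constant-rank hypothesis holds. Left-multiplying by $\hat{D}^{-1}$ then gives $e_a(\omega_c^\cdot(e_b)) = \hat{D}^{-1} N_{a,b,c,\cdot}$, as claimed.

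I do not expect a genuine obstacle in the observation itself; the only points requiring care are (i) that $e_a$ is being applied to \emph{scalar} functions $\omega_c^i(e_b)$ and $g^D(e_i,e_d)$, so the ordinary Leibniz rule — not $\nabla$ — is the correct tool, and (ii) that $\hat{D}^{-1}$ exists only on the dense open set where $D$ has constant rank $C-1$. The substantive computation this observation isolates, and which still has to be carried out, is the explicit evaluation of $e_a\big(g^D(\nabla_{e_b} e_c, e_d)\big)$ in terms of the first-order quantities $\partial_i p_k$, $\partial_i s_k$ and the connection forms; this can be assembled from Proposition~\ref{prop:e_g_e_e} together with the already-established formulas for $\Hess{p_a}\,e_b$ and for $\omega^i_j(e_k)$, so that $N_{a,b,c,\cdot}$ — and hence $e_a(\omega_c^\cdot(e_b))$ — becomes computable without second automatic differentiation or division by $p_i$.
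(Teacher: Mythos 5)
Your proposal is correct and follows essentially the same route as the paper: the observation is itself the derivation (substitute $\nabla_{e_b}e_c=\sum_i\omega_c^i(e_b)e_i$, apply $e_a$ via the Leibniz rule on scalar functions, rearrange using the symmetry of $\hat{D}$, and left-multiply by $\hat{D}^{-1}$). Your added remarks on why $\hat{D}$ is invertible on the constant-rank locus and on $e_a$ acting on scalars are sound and consistent with the paper's framework.
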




\begin{theorem}
The expression of the curvature is given by:
\begin{multline}\label{thm-conn-form}
\Omega^i_j \pa{e_a,e_b}={\pa{\hat{D}^{-1}N_{a,b,j,\cdot}}_i -\pa{\hat{D}^{-1}N_{b,a,j,\cdot}}_i - \omega^i_j \pa{\bra{e_a,e_b}} } \\
 + \sum_k \bra{\omega^i_k\pa{e_a}\omega^k_j\pa{e_b} - \omega^i_k\pa{e_b}\omega^k_j\pa{e_a}}
\end{multline}


where $N_{a,b,c,\cdot} = \pa{N_{a,b,c,d}}_{d=1,\ldots,C-1}$ is the vector defined in Observation~\ref{prop:e_omega_e} by $e_a \pa{g^D\pa{\nabla_{e_b} e_c, e_d}} - \sum_i \omega_c^i\pa{e_b}e_a\pa{g^D\pa{e_i,e_d}}$ for $d=1,\ldots,C-1$, and  where $\hat{D}=\pa{g^D\pa{e_i,e_j}}_{i,j = 1,\ldots,C-1}$ is the matrix of pairwise metric products of the frame.
\end{theorem}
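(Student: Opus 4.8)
The plan is to specialize the general curvature--form identity of Proposition~\ref{curv-form} to the moving frame $(e_k)$ and then substitute, term by term, the closed--form expressions assembled in Sections~\ref{sec:curv-of-data-leaves} and~\ref{sec:computation-curv-forms}. First I would take $X=e_a$, $Y=e_b$ in Proposition~\ref{curv-form}, so that
\[
\Omega^i_j(e_a,e_b) = (d\omega^i_j)(e_a,e_b) + \sum_k (\omega^i_k\wedge\omega^k_j)(e_a,e_b),
\]
and expand the two summands by the elementary formulae~(\ref{for1}) and~(\ref{for2}). The wedge term becomes at once $\sum_k\bra{\omega^i_k(e_a)\omega^k_j(e_b) - \omega^i_k(e_b)\omega^k_j(e_a)}$, which is precisely the last bracket in the claimed identity, while the exterior--derivative term splits as
\[
(d\omega^i_j)(e_a,e_b) = e_a\pa{\omega^i_j(e_b)} - e_b\pa{\omega^i_j(e_a)} - \omega^i_j\pa{\bra{e_a,e_b}}.
\]

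The second step is to identify the two directional derivatives $e_a(\omega^i_j(e_b))$ and $e_b(\omega^i_j(e_a))$. Here I would invoke Observation~\ref{prop:e_omega_e}: differentiating the defining relation $g^D(\nabla_{e_b}e_j,e_d)=\sum_i\omega_j^i(e_b)\,g^D(e_i,e_d)$ along $e_a$, applying the Leibniz rule, and solving the resulting linear system with the (invertible, by non--degeneracy of $g^D$ on $\cL$) matrix $\hat D$ gives $e_a(\omega_j^{\cdot}(e_b)) = \hat D^{-1}N_{a,b,j,\cdot}$, that is, $e_a(\omega^i_j(e_b)) = \pa{\hat D^{-1}N_{a,b,j,\cdot}}_i$, and symmetrically for the $(b,a)$ term. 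Substituting these into the display above produces exactly the first three terms of the theorem, the Lie--bracket term being the one written as $\omega^i_j(\bra{e_a,e_b})$. Combining this with Proposition~\ref{prop:lie_bracket_nablap_a} and Proposition~\ref{prop:omega_bra} then rewrites that bracket term through the frame $e_k$, the scalars $\omega^i_k(e_l)$, the score Jacobian $J(s)$, the probabilities $p_k$ and their first derivatives only, so the entire right--hand side is free of double automatic differentiation and of division by a possibly vanishing $p_k$.

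The only genuinely delicate point is unwinding what sits inside $N_{a,b,c,d}=e_a\pa{g^D(\nabla_{e_b}e_c,e_d)}-\sum_i\omega_c^i(e_b)\,e_a\pa{g^D(e_i,e_d)}$, since $e_a(g^D(\nabla_{e_b}e_c,e_d))$ formally involves second derivatives of the $\omega$'s and third derivatives of $p$. I would control this by first writing $g^D(\nabla_{e_b}e_c,e_d)=C_{b,c,d}$ via the Koszul formula in terms of expressions of the form $e_\ast(g^D(e_\ast,e_\ast))$ and $g^D(e_\ast,\bra{e_\ast,e_\ast})$, which are already handled by Proposition~\ref{prop:e_g_e_e} and the Lie--bracket propositions, and then differentiating once more along $e_a$ using Proposition~\ref{prop:e_g_e_e} together with the Hessian identities~(\ref{eq:nabla_p_a}), \eqref{eq:H_p_a_simple} and assumption~\ref{H2}, which annihilates $\Hess{s_k}$ off the nowhere dense set $A$. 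This keeps every contribution at most first order in $J(s)$ multiplied by polynomial combinations of $p$, $\partial p$ and $\partial s$. Once this reduction is in place, assembling the four pieces into~(\ref{thm-conn-form}) is a routine substitution and the stated formula follows.
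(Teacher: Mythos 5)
Your proposal is correct and follows essentially the same route as the paper: specialize Proposition~\ref{curv-form} to $X=e_a$, $Y=e_b$, expand via formulas~(\ref{for1}) and~(\ref{for2}), and identify $e_a\pa{\omega^i_j(e_b)}=\pa{\hat D^{-1}N_{a,b,j,\cdot}}_i$ exactly as in Observation~\ref{prop:e_omega_e}, with the bracket term handled by Propositions~\ref{prop:lie_bracket_nablap_a} and~\ref{prop:omega_bra} and the interior of $N_{a,b,c,d}$ reduced by the Koszul formula and the appendix lemmas. The paper gives no further argument beyond these ingredients, so your assembly is the intended proof.
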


\medskip
We report the lemmas needed to compute the tensor $N_{a,b,c,d}$ in Appendix~\ref{app:proof-computation-curv-forms}.

\revision{
\begin{remark}
    The practical computation of the curvature forms involves some matrix multiplication, inversion and the computation of the first order derivatives of the network which can be done with automatic differentiation \cite{baydin2018AutomaticDiff}. The most time consuming operation is the differentiation of the network as it scales with the number of layers (thus the complexity of the task to learn), with the input dimension and with the number of classes. Improving the efficiency of this computation is left as future work. 
\end{remark}
}

\section{Experiments} \label{sec:exp}

\eliot{
To understand why the frame $e_a=\sum_i \partial_i p_a \partial_i$ and the DIM were chosen to compute the connection and curvature forms, we shall focus on experiments on the MNIST and the CIFAR10 datasets.
As we will see, this frame and the DIM can provide some explanations to the response of the neural network.\footnote{The code used to produce these results is available at \url{https://github.com/eliot-tron/curvcomputenn/}.}

\revision{Note that in this section, we focus on the DIM and the frame $\pa{e_a}_{a=1}^{C-1}$. We do not inspect the connection or curvature forms and leave them for future work.}

The MNIST dataset is composed of 60k train images of shape $28\times 28$ pixels depicting handwritten digits between $0$ and $9$, and the CIFAR10 dataset is composed of 60k RGB train images of shape $32\times 32$
classified in 10 different classes (see Table~\ref{tab:CIFAR10_classes}).

\begin{table}[ht]
    \caption{Correspondence index - class of the CIFAR10 dataset.}
    \label{tab:CIFAR10_classes}
    \centering
    \begin{tabular}{ccccccccccc}
        \toprule
         No.  & 0 & 1 & 2 & 3 & 4 & 5 & 6 & 7 & 8 & 9 \\
         \midrule
				 Class &\textit{airplane} & \textit{automobile} & \textit{bird} & \textit{cat} & \textit{deer} & \textit{dog} & \textit{frog} & \textit{horse} & \textit{ship} & \textit{truck} \\
         \botrule
    \end{tabular}
\end{table}

On Figure~\ref{fig:MNIST-ReLU-DIM} are represented various input points $x$ from the MNIST training set and the corresponding matrix\footnote{We plot here the matrix $\hat{D}$ with $a,b$ going up to $C$, and not $C-1$, to represent all the classes and have an easier interpretation.} $\hat{D}_x=\pa{g_x\pa{e_a,e_b}}_{a,b=1,\ldots,C}$. The neural network used in this experiment is defined as stated in Table~\ref{tab:MNIST_architecture}. This network has been trained on the test set of MNIST with stochastic gradient descent until convergence (98\% accuracy on the test set).

On Figure~\ref{fig:MNIST-ReLU-DIM}, it can be seen that the matrices $\hat{D}_x$ have only a few main components indicating which probabilities are the easiest to change. A large (positive) component on the diagonal at index $i$ suggests that one can increase or decrease easily $p_i(x)$ by moving in the directions $\pm e_i$. A negative (resp. positive) component at position $(i,j)$ indicates that, starting from the image $x$, classes $i$ and $j$ are in opposite (resp. the same) directions: increasing $p_i(x)$ will most likely decrease (resp. increase) $p_j(x)$.

For instance, the first image on the top left is correctly classified as a 2 by the network, but since the coefficient $(3,3)$ of matrix $\hat{D}$ is positive too, it indicates that class 3 should be easily reachable. This makes sense, as the picture can also be interpreted as a part of the 3 digit. Negative coefficients at $(3,2)$ and $(2,3)$ thus indicates that going in the direction of a 3 will decrease the probability of predicting a 2. On the second picture, the same phenomenon arises but with classes 2 and 8. Indeed, the buckle in the bottom part of the picture brings it closer to an 8.

\begin{remark}
Be careful as teal colored coefficients on Figure~\ref{fig:MNIST-ReLU-DIM} and Figure~\ref{fig:CIFAR10-ReLU-DIM} are not exactly zero but rather very low compared to the few main ones that are yellow and purple.
\end{remark}

\begin{table}[ht]
    \centering
    \caption{Architecture of the neural network trained on MNIST.}
    \label{tab:MNIST_architecture}
    \begin{tabular}{rl}
    \toprule
    No.  & Layers (sequential) \\
    \midrule
		(0): & \texttt{Conv2d(1, 32, kernel\_size=(3, 3), stride=(1, 1))} \\
  (1): & \texttt{ReLU()}\\
  (2): & \texttt{Conv2d(32, 64, kernel\_size=(3, 3), stride=(1, 1))}\\
  (3): & \texttt{ReLU()}\\
  (4): & \texttt{MaxPool2d(kernel\_size=2, stride=2, padding=0, dilation=1)}\\
  (5): & \texttt{Flatten()}\\
  (6): & \texttt{Linear(in\_features=9216, out\_features=128, bias=True)}\\
  (7): & \texttt{ReLU()}\\
  (8): & \texttt{Linear(in\_features=128, out\_features=10, bias=True)}\\
  (9): & \texttt{Softmax()}\\
  \botrule
    \end{tabular}
\end{table}

\begin{figure}[ht]
    \centering
    \includegraphics[width=0.8\textwidth]{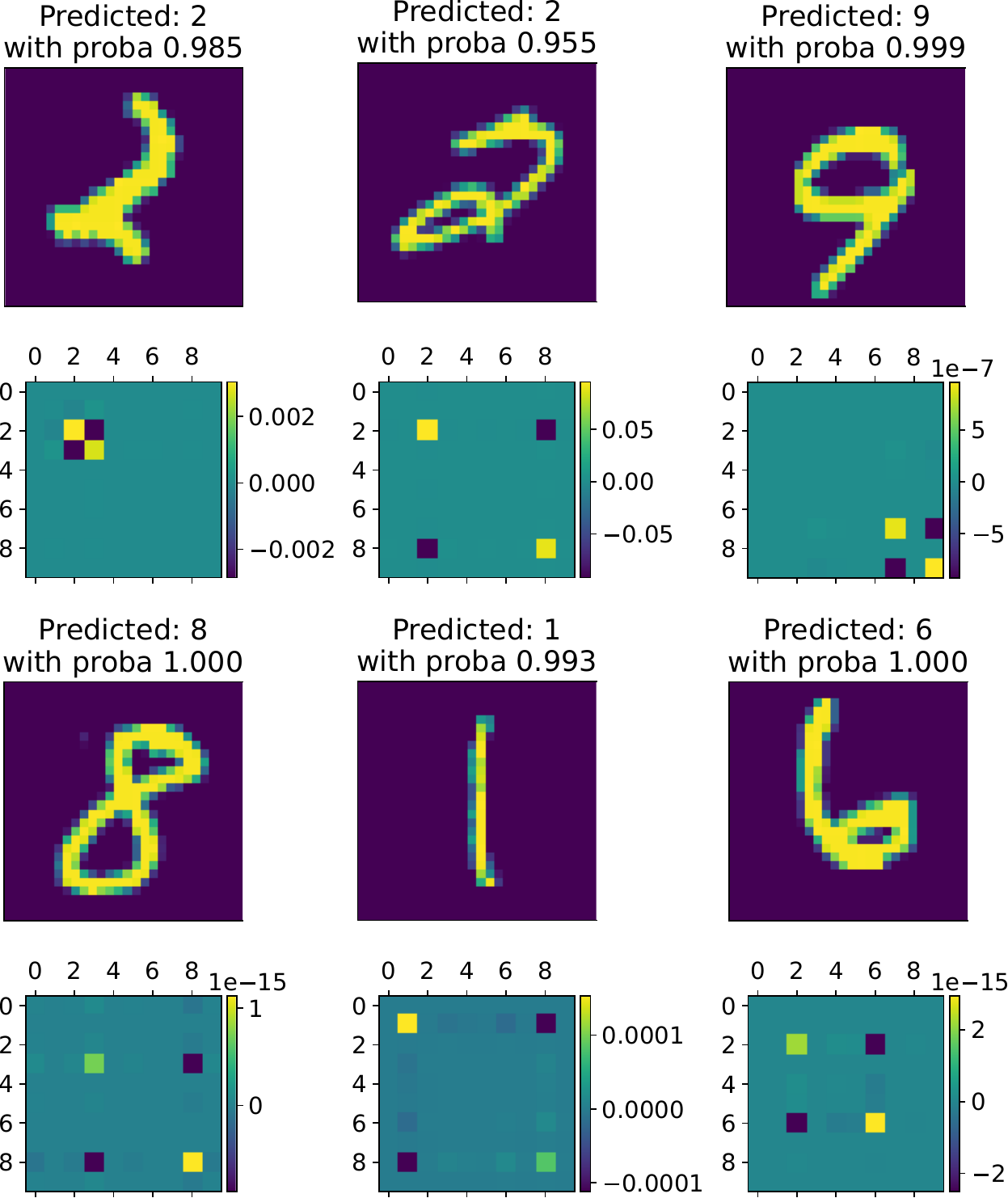}
    \caption{Couples of input point $x$ (above) and the corresponding matrix $\hat{D}(x) = \pa{g^D_x\pa{e_a,e_b}}_{a,b=1,\ldots,C}$ (below) on MNIST.}
    \label{fig:MNIST-ReLU-DIM}
\end{figure}

On Figure~\ref{fig:CIFAR10-ReLU-DIM} are represented various input points $x$ from the CIFAR10 training set and the corresponding matrix $\hat{D}_x=\pa{g_x\pa{e_a,e_b}}_{a,b=1,\ldots,C}$. The neural network used in this experiment is defined as stated in Table~\ref{tab:CIFAR10_architecture}. This network has been trained on the test set of CIFAR10 with stochastic gradient descent until convergence (84\% accuracy on the test set).

On Figure~\ref{fig:CIFAR10-ReLU-DIM}, it can be seen that the matrices $\hat{D}_x$ have only a few main components indicating which probabilities are the easiest to change, i.e. with a similar behavior as seen above for MNIST. The interpretation of matrix $\hat{D}$ then is identical.

For instance, the first image on the top left is correctly classified as a dog (class No. 5) by the network, but since the coefficient $(3,3)$ of matrix $\hat{D}$ is positive too, it indicates that class No. 3 ``cat'' should be easily reachable. This makes sense since dogs and cats form a subclass of similar little animals. Negative coefficients at $(3,2)$ and $(2,3)$ thus indicates that going in the direction of the cat will decrease the probability of predicting a dog. There are also positive coefficients at $(3,7)$ and $(7,3)$ indicating that going in the direction of the cat should also slightly increase the probability of seeing a horse. Again, this makes sense as they all belong in the animal subclass.

On the second picture, the dog can be transformed into a cat or a frog, probably because of the green cloth around its neck.

\begin{table}[ht]
    \caption{Architecture of the neural network trained on CIFAR10.}
    \label{tab:CIFAR10_architecture}
    \centering
    \begin{tabular}{rl}
    \toprule
    No.  & Layers (sequential)\\
    \midrule
    (0): & \texttt{Conv2d(3, 64, kernel\_size=(3, 3), stride=(1, 1), padding=(1, 1)) }\\
    (1): & \texttt{BatchNorm2d(64, eps=1e-05, momentum=0.1) }\\
    (2): & \texttt{ReLU() }\\
    (3): & \texttt{MaxPool2d(kernel\_size=2, stride=2, padding=0, dilation=1) }\\
    (4): & \texttt{Conv2d(64, 128, kernel\_size=(3, 3), stride=(1, 1), padding=(1, 1)) }\\
    (5): & \texttt{BatchNorm2d(128, eps=1e-05, momentum=0.1) }\\
    (6): & \texttt{ReLU() }\\
    (7): & \texttt{MaxPool2d(kernel\_size=2, stride=2, padding=0, dilation=1) }\\
    (8): & \texttt{Conv2d(128, 256, kernel\_size=(3, 3), stride=(1, 1), padding=(1, 1)) }\\
    (9): & \texttt{BatchNorm2d(256, eps=1e-05, momentum=0.1) }\\
    (10): & \texttt{ReLU() }\\
    (11): & \texttt{Conv2d(256, 256, kernel\_size=(3, 3), stride=(1, 1), padding=(1, 1)) }\\
    (12): & \texttt{BatchNorm2d(256, eps=1e-05, momentum=0.1) }\\
    (13): & \texttt{ReLU() }\\
    (14): & \texttt{MaxPool2d(kernel\_size=2, stride=2, padding=0, dilation=1, ceil mode=False) }\\
    (15): & \texttt{Conv2d(256, 512, kernel\_size=(3, 3), stride=(1, 1), padding=(1, 1)) }\\
    (16): & \texttt{BatchNorm2d(512, eps=1e-05, momentum=0.1) }\\
    (17): & \texttt{ReLU() }\\
    (18): & \texttt{Conv2d(512, 512, kernel\_size=(3, 3), stride=(1, 1), padding=(1, 1)) }\\
    (19): & \texttt{BatchNorm2d(512, eps=1e-05, momentum=0.1) }\\
    (20): & \texttt{ReLU() }\\
    (21): & \texttt{MaxPool2d(kernel\_size=2, stride=2, padding=0, dilation=1, ceil mode=False) }\\
    (22): & \texttt{Conv2d(512, 512, kernel\_size=(3, 3), stride=(1, 1), padding=(1, 1)) }\\
    (23): & \texttt{BatchNorm2d(512, eps=1e-05, momentum=0.1) }\\
    (24): & \texttt{ReLU() }\\
    (25): & \texttt{Conv2d(512, 512, kernel\_size=(3, 3), stride=(1, 1), padding=(1, 1)) }\\
    (26): & \texttt{BatchNorm2d(512, eps=1e-05, momentum=0.1) }\\
    (27): & \texttt{ReLU() }\\
    (28): & \texttt{MaxPool2d(kernel\_size=2, stride=2, padding=0, dilation=1, ceil mode=False) }\\
    (29): & \texttt{AvgPool2d(kernel\_size=1, stride=1, padding=0) }\\
  (30): & \texttt{Linear(in\_features=512, out\_features=10, bias=True) }\\
  (31): & \texttt{Softmax(dim=1) }\\
  \botrule
    \end{tabular}
\end{table}
}

\begin{figure}[ht]
    \centering
    \includegraphics[width=0.8\textwidth]{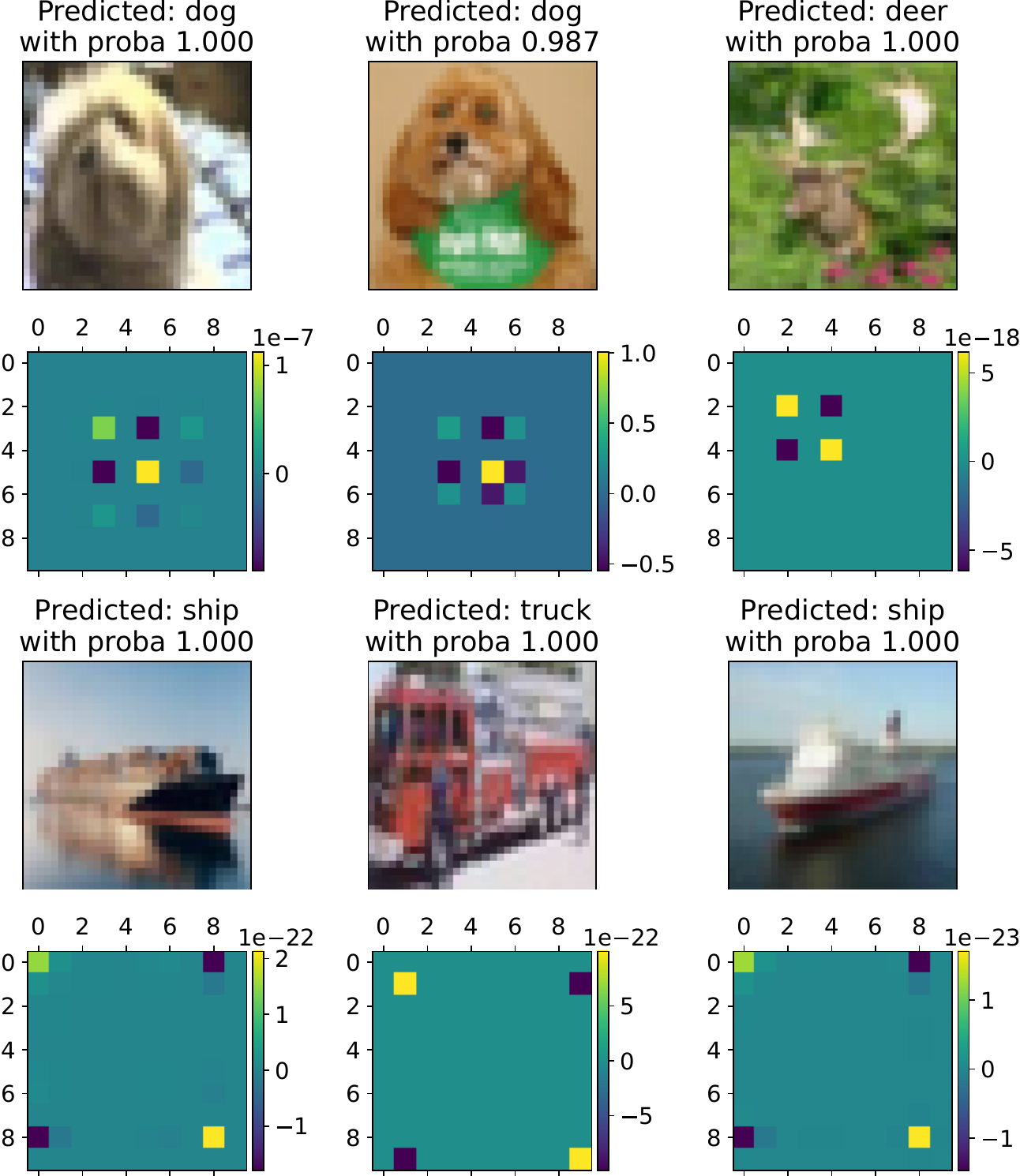}
    \caption{Couples of input point $x$ (above) and the corresponding matrix $\hat{D}(x) = \pa{g^D_x\pa{e_a,e_b}}_{a,b=1,\ldots,C}$ (below) on CIFAR10.}
    \label{fig:CIFAR10-ReLU-DIM}
\end{figure}

\medskip

\revision{
More experiments can be found in Appendix~\ref{app:more_exp} with full details to ensure reproducibility.
}

\section{Conclusion}
\label{sec:conclusion}
In this study, we have shown that analyzing the geometry of the data manifold of a neural network using Cartan moving frames is natural and practical. Indeed, from the Fisher Information Matrix, we define a corresponding Data Information Matrix that generates a foliation structure in the data space. The partial derivatives of the probabilities learned by the neural network define a Cartan moving frame on the leaves of the data manifold. We detail how the moving frame can be used as a tool to provide some explanations on the classification changes that can easily happen or not around a given data point. Experiments on the MNIST and CIFAR datasets confirm the relevance of the explanations provided by the Cartan moving frames and the corresponding Date Information Matrix. For very large neural network, the theory still holds. However, the method might be limited by the computational requirements of the partial derivatives calculation for each class (usually obtained through automatic differentiation).
We believe that combining the moving frame, the connection and the curvature forms should provide new insights to build more advanced explainable AI tools. This is work in progress.


\newpage

\section*{Data availability} Data sets used in this article can be found online at:
\begin{itemize}
    \item MNIST dataset \cite{lecun1998mnist}: \url{https://yann.lecun.com/exdb/mnist/}
    \item CIFAR10 dataset \cite{cifar-10}: \url{https://www.cs.toronto.edu/~kriz/cifar.html}
    \item Code: \url{https://github.com/eliot-tron/curvcomputenn/}
\end{itemize}
We used Python and PyTorch to access these data.

\bibliography{main}

\appendix
\revision{
\section{Cartan moving frames}\label{app:cartan_moving_frames}

In this appendix we briefly review the notion of moving frames for differentiable
manifolds, highlighting the key geometric properties, with some pictures
to help the intuition. For more details see \cite{tu2017differential} and \cite{petersen}.

\medskip
The theory of Cartan moving frames ({\sl repère mobile}) stems from the
Frenet– Serret frame theory, which lead to the celebrated Frenet-Serret
formulas, developed independently by Frenet and Serret,
in the XIX century, to describe the motion of a particle
on a curve in $\R^3$.  In fact, given smooth curve $\gamma$ in $\R^3$, a Frenet-Serret frame is
an orthonormal basis of $\R^3$ attached at each point of
$\gamma$ and consisting of the (normalized) tangent, normal and conormal
vectors (see Fig. \ref{fig:frenet}), obtained via subsequent derivations
of the curve $\gamma$.

\begin{figure}[h]
    \centering
    \includegraphics[width=0.8\textwidth]{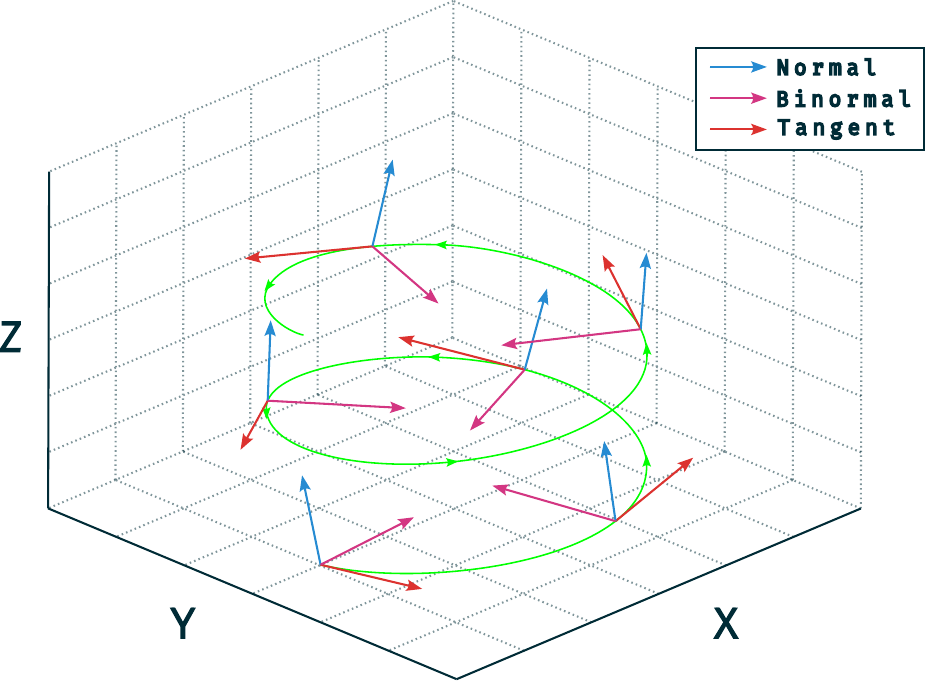}
    \caption{Frenet-Serret Frames}
    \label{fig:frenet}
\end{figure}

Notice that, once we give the curve $\gamma$, the definition of
its Frenet-Serret frame is independent from the coordinates of $\R^3$; 
it is in fact an ``intrinsic'' reference frame for $\R^3$, associated to
each point of the curve.

\medskip
The notion of moving frames is a far reaching generalization of Frenet-Serret frames, where
we replace the smooth curve $\gamma$, with a smooth manifold $M$.

\medskip
\begin{definition}
We define
a \textit{moving frame} on a differentiable manifold $M$ a collection of vector fields
$e_1, e_2, \dots, e_n$, which are a basis of $T_pM$, the tangent space at each
point $p \in M$.
(see Fig. \ref{fig:moving}). We also ask such collection to be varying smoothly with the
point $p$ in $M$. The assignment of a moving frame at each point $p \in M$ is called
a \textit{soldering form} $\theta: T_pM \lra \R^n$.
\end{definition}

Hence, once a soldering form
is given, we have an intrinsic point of view on the geometry of a manifold and we can,
in some sense, effectively replace the notion of local coordinates. Such
point of view was
successfully exploited to describe, in
a local coordinate free manner, some physical theories, most
notably  {\sl general relativity}, see
\cite{carroll}.

In Riemannian geometry, we take the 
moving frames to be orthonormal frames, that is, at each point $p \in M$ the frame
is an orthonormal basis of $T_pM$, with respect to the metric on the manifold.

\begin{figure}[h]
    \centering
    \includegraphics[width=0.6\textwidth]{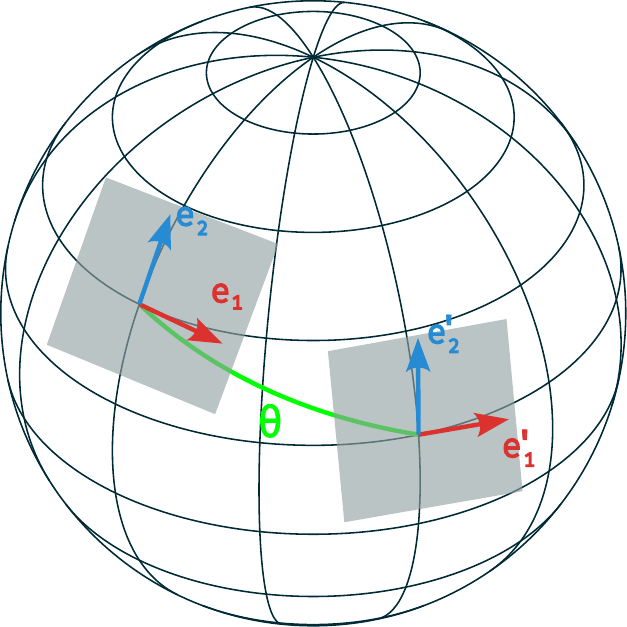}
    \caption{Soldering form on a sphere.}
    \label{fig:moving}
\end{figure}

\medskip
In our application, that is the geometry of the data manifold $\cL$, such moving frame comes
naturally from the neural network and it is given by the gradients of the probability
output, so that we immediately have a soldering form on $\cL$ as follows:
\begin{equation}\label{soldering}
\theta:T_p\cL \lra \R^{C-1}, \qquad e_i \mapsto \nabla p_i
\end{equation}
Moreover we have a natural Riemannian metric on the data leaf via the DIM
(data information matrix). 
The soldering form defines uniquely (Prop. 1.1 pg 378 in \cite{petersen}) a matrix
of one forms $\omega=(\omega^i_j$), called (evocatively) {\it spin connection}.
So we can actually view the formula in Def. \ref{def:connection_forms} in two
equivalent ways:
\begin{enumerate}
\item Given our soldering form $\theta$ on the Riemannian manifold $\cL$ as in (\ref{soldering}),
we have a spin connection $\omega$ which gives the Levi Civita connection by the
formula (see (\ref{lc-con}) in the main text):
$$
	\nabla_X e_j = \sum_i \omega_j^i (X) e_i
$$
\item Given the metric via the DIM on the data leaf, we can compute its
(unique) Levi Civita connection and this will give us $\omega$ the spin connection, via
the above formula.
\end{enumerate}

We choose to take the second point of view in our paper, to avoid introducing too much
terminology and results in moving frames theory, since we prefer to rely on the 
correspondence between Levi-Civita connections and metrics, possibly more familiar to
most readers.

\medskip
These equivalent ways to view the Levi Civita connection is an illustration of the power
of the moving frames method as a coordinate independent way to approach
Riemannian geometry. We can indeed compute the curvature form as:
\begin{equation}\label{eq-conn-form-app}
\Omega \pa{X,Y}  = \pa{d\omega} \pa{X,Y} + \omega\wedge\omega\pa{X,Y}.
\end{equation}
as we reported in (\ref{eq-conn-form}) in the main text.
Such curvature form is linked with the usual Riemannian curvature tensor
via the equation:
$$
R(X,Y)e_j=\sum \Omega_j^i(X,Y) e_i
$$
(see \cite{petersen} pg 379).

\medskip
For more details on the method of moving frames and its link with the equivalent
theory of Riemannian geometry making use of local coordinates, we invite
the reader to consult \cite{petersen} App. B.

}

\section{An auxiliary lemma}
\label{app:an-auxiliary-lemma}

We report here, for convenience, an alternative formula for the expression used in \ref{prop:e_g_e_e}.

\begin{prop}\label{prop:derivationscalarproduct}
				Recall that $U = \pa{u_{i,j}} = \pa{\ang{e_i, e_j}_{\re}}_{i,j}$. Then
				\begin{align*}
								e_a \pa{g^D\pa{e_b, e_c}} = \quad &\sum_j u_{aj}u_{bj} u_{cj} \frac{1}{p_j^2} \\
								+ & \pa{\frac{u_{ab}}{p_b} + \frac{u_{ac}}{p_c}}\sum_j \frac{1}{p_j} u_{bj}u_{cj}\\
								- & \sum_j \pa{u_{cj} p_b + u_{bj}p_c} \frac{1}{p_j} \sum_{k,i} \pa{\partial_{i} p_a \partial_{i} s_k} u_{jk} \\
								- &\sum_i \pa{\partial_{i} p_a \partial_{i} s_k} \pa{u_{cj} u_{bk} + u_{bj} u_{ck}}
				.\end{align*}
				Notice that on the second line, we recognise the expression of $g^D\pa{e_b, e_c}$ for the sum.
\end{prop}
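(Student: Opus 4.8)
The plan is to avoid second derivatives of the network by differentiating, along $e_a$, the closed form already available for the restricted metric. Recall that $\hat{D} = UP^{-1}U$, so
\[
g^D\pa{e_b,e_c} = \pa{UP^{-1}U}_{bc} = \sum_j \frac{1}{p_j}\,u_{bj}u_{cj}.
\]
Since the vector field $e_a = \sum_i (\partial_i p_a)\,\partial_i$ acts on a smooth function $f$ by $e_a(f) = \sum_i (\partial_i p_a)(\partial_i f)$ — an honest directional derivative — the ordinary Leibniz rule gives
\[
e_a\pa{g^D\pa{e_b,e_c}} = \sum_j \bra{ \frac{u_{cj}}{p_j}\,e_a(u_{bj}) + \frac{u_{bj}}{p_j}\,e_a(u_{cj}) - \frac{e_a(p_j)}{p_j^2}\,u_{bj}u_{cj} },
\]
so everything reduces to the two quantities $e_a(p_j)$ and $e_a(u_{bj})$.

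The first is immediate: $e_a(p_j) = \sum_i (\partial_i p_a)(\partial_i p_j) = u_{aj}$. For the second, I would write $u_{bj} = \ang{e_b,e_j}_\re$ and differentiate using compatibility of the flat Euclidean connection $\nabla^{\mathrm{euc}}$ with $\ang{\cdot,\cdot}_\re$, so that $e_a(u_{bj}) = \ang{\nabla^{\mathrm{euc}}_{e_a}e_b,\, e_j}_\re + \ang{e_b,\, \nabla^{\mathrm{euc}}_{e_a}e_j}_\re$. Here $\nabla^{\mathrm{euc}}_{e_a}e_b = \sum_k e_a(\partial_k p_b)\,\partial_k = \Hess{p_b}\,e_a$, which is exactly the identity already appearing in the proof of Prop.~\ref{prop:lie_bracket_nablap_a}. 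Next I would insert the closed form
\[
\Hess{p_b}\,e_a = \frac{u_{ab}}{p_b}\,e_b - p_b\sum_k w_{ak}\,e_k, \qquad w_{ak} := \sum_i (\partial_i p_a)(\partial_i s_k),
\]
which follows from (\ref{eq:H_p_a_simple}) by contracting $\Hess{p_b}_{ij}$ against $(e_a)_j = \partial_j p_a$ and recognizing $\sum_j (\partial_j p_b)(\partial_j p_a) = u_{ab}$ and $\sum_j (\partial_j s_k)(\partial_j p_a) = w_{ak}$. Taking Euclidean inner products then yields $\ang{\Hess{p_b}\,e_a,\,e_j}_\re = \frac{u_{ab}}{p_b}u_{bj} - p_b\sum_k w_{ak}u_{kj}$ and, with $j$ in the role of $b$, $\ang{\Hess{p_j}\,e_a,\,e_b}_\re = \frac{u_{aj}}{p_j}u_{jb} - p_j\sum_k w_{ak}u_{kb}$, hence a completely explicit expression for $e_a(u_{bj})$.

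Substituting these into the displayed Leibniz identity and grouping, I expect the four lines of the claim to appear as follows. The piece $\frac{u_{aj}u_{jb}}{p_j}$ of $e_a(u_{bj})$, its $b\leftrightarrow c$ mirror, and the term $-\frac{e_a(p_j)}{p_j^2}u_{bj}u_{cj} = -\frac{u_{aj}u_{bj}u_{cj}}{p_j^2}$ combine (as $1+1-1$) into $+\sum_j \frac{1}{p_j^2}u_{aj}u_{bj}u_{cj}$, the first line. The pieces $\frac{u_{ab}}{p_b}u_{bj}$ and $\frac{u_{ac}}{p_c}u_{cj}$ give $\pa{\frac{u_{ab}}{p_b} + \frac{u_{ac}}{p_c}}\sum_j \frac{1}{p_j}u_{bj}u_{cj}$, the second line (the sum there is just $g^D(e_b,e_c)$). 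The pieces $-p_b\sum_k w_{ak}u_{kj}$ and $-p_c\sum_k w_{ak}u_{kj}$, after one factor $p_j$ cancels, give the third line; and the pieces $-p_j\sum_k w_{ak}u_{kb}$ and $-p_j\sum_k w_{ak}u_{kc}$, in which $p_j$ cancels outright, give the fourth line. The only genuine difficulty is the bookkeeping in this last step: there are three symmetric pairs of terms to be matched correctly, and in particular the $1+1-1$ cancellation fixing the coefficient of the $\frac{1}{p_j^2}$ term is the step most prone to a sign slip. The two auxiliary facts used — the formula for $e_a(u_{bj})$ via the flat connection and the closed form for $\Hess{p_b}\,e_a$ — are each an immediate consequence of results already established, so the substance of the proof is exactly this careful expansion.
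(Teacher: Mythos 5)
Your proposal is correct and follows essentially the same route as the paper: expand $g^D(e_b,e_c)=\sum_j u_{bj}u_{cj}/p_j$, apply the Leibniz rule along $e_a$, and eliminate the second derivatives $\partial_i u_{bj}$ via the closed form of $\Hess{p_b}$ from (\ref{eq:H_p_a_simple}); your regrouping into the four displayed lines checks out (the only quibble is the aside about a factor $p_j$ cancelling in the third line, where in fact the $1/p_j$ survives as written).
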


\begin{proof}
				\begin{align*}
								e_a \pa{g^D\pa{e_b, e_c}} &= \sum_i \partial_i p_a \partial_i \pa{g^D\pa{e_b, e_c}} \\
																													&= \sum_i \partial_i p_a \partial_i \pa{\sum_j \frac{1}{p_j} u_{bj} u_{cj}} \\
																													&= \sum_{i,j} \partial_i p_a \pa{\pa{\partial_i\frac{1}{p_j}}u_{cj}u_{bj} + \frac{1}{p_j}\pa{\partial_i u_{bj}}u_{cj} + \frac{1}{p_j} u_{bj} \pa{\partial_i u_{cj}}}
				.\end{align*}

				Now we compute $\partial_i u_{bj}$ :
				\begin{align*}
								\partial_i u_{bj} &= \partial_i \pa{\sum_l\partial_l p_b \partial_l p_j} \\
																	&= \sum_l\underbrace{\pa{\partial_i \partial_l p_b}}_{=\Hess{p_b}_{i,l}} \partial_l p_j + \sum_l \partial_l p_b \pa{\partial_i \partial_l p_j} \\
																	&= \sum_l \pa{\frac{\partial_i p_b \partial_l p_b}{p_b} - p_b \sum_k \partial_i s_k \partial_l p_k} \partial_l p_j  \\
																	& \quad + \sum_l\partial_l p_b \pa{\frac{\partial_i p_j \partial_l p_j}{p_j} - p_j \sum_k \partial_i s_k \partial_l p_k} \\
																	&= \sum_l\pa{\frac{\partial_i p_b}{p_b} + \frac{\partial_i p_j}{p_j}} u_{aj} - \sum_l\pa{p_a \partial_l p_j + p_j \partial_l p_a} \sum_k \partial_i s_k \partial_l p_k
				.\end{align*}

				Thus

				\begin{align*}
								e_a \pa{g^D\pa{e_b, e_c}} = \quad & \sum_j u_{bj} u_{cj} u_{aj} \frac{-1}{p_j^2} \\
								 + & \sum_j \pa{\frac{u_{ab}}{p_b} + \frac{u_{aj}}{p_j}} \frac{1}{p_j} u_{bj} u_{cj}  \\
								 - &\sum_j \frac{1}{p_j} u_{cj} \pa{p_b \sum_{k,l} \pa{\partial_l p_a \partial_l s_k}u_{jk} + p_j \sum_{k,l} \pa{\partial_l p_a \partial_l s_k} u_{bk}} \\ 
								 + & \sum_j \pa{\frac{u_{ac}}{p_c} + \frac{u_{aj}}{p_j}} \frac{1}{p_j} u_{bj} u_{cj}  \\
								 - &\sum_j \frac{1}{p_j} u_{bj} \pa{p_c \sum_{k,l} \pa{\partial_l p_a \partial_l s_k}u_{jk} + p_j \sum_{k,l} \pa{\partial_l p_a \partial_l s_k} u_{ck}}
				.\end{align*}
\end{proof}

To get a numerically stable expression for $e_a g^D\pa{ e_b, e_c }$, we can develop  $u$ with \autoref{eq:nabla_p_a} and the probability at the denominator will cancel out with the one in factor of \autoref{eq:nabla_p_a}.

\section{Further computations of the curvature forms}
\label{app:proof-computation-curv-forms}

In this section we report, for convenience, some lemmas necessary for the full derivation of the curvature forms calculations. 

\begin{lemma}
    \begin{equation}
        e_a \pa{e_b\pa{g^D\pa{e_c, e_d}}} = e_a\pa{M_{b,d,c}} + e_a\pa{M_{b,c,d}} + e_a\pa{e_c^T J(s)^T A_b J(s) e_d }
    .\end{equation}
\end{lemma}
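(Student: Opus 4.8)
The statement to be proven is the identity
\[
e_a\pa{e_b\pa{g^D\pa{e_c, e_d}}} = e_a\pa{M_{b,d,c}} + e_a\pa{M_{b,c,d}} + e_a\pa{e_c^T J(s)^T A_b J(s) e_d },
\]
so the plan is essentially to apply the derivation operator $e_a$ to both sides of the formula from Proposition~\ref{prop:e_g_e_e}. Recall that Proposition~\ref{prop:e_g_e_e} already gives
\[
e_b\pa{g^D\pa{e_c, e_d}} = M_{b,d,c} + M_{b,c,d} + \pa{J(s) e_c}^T A_b J(s) e_d,
\]
valid on the dense open set where $\ref{H2}$ holds. Since $e_a$ is a first-order differential operator (the directional derivative along the vector field $e_a = \sum_i \partial_i p_a \,\partial_i$), it is linear, so applying it to this equation immediately distributes over the three summands on the right-hand side. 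This yields exactly the claimed identity, with the only subtlety being to match notation: $\pa{J(s)e_c}^T A_b J(s) e_d$ is the same scalar as $e_c^T J(s)^T A_b J(s) e_d$, so the third term is literally $e_a$ applied to that scalar function.

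First I would recall that all three terms $M_{b,d,c}$, $M_{b,c,d}$, and $\pa{J(s)e_c}^T A_b J(s)e_d$ are smooth scalar functions on the leaf (away from the nowhere-dense bad set $A$), so $e_a$ applied to their sum is defined and equals the sum of $e_a$ applied to each. Then I would simply invoke Proposition~\ref{prop:e_g_e_e} to rewrite $e_b\pa{g^D\pa{e_c,e_d}}$ as that sum, apply $e_a$, and use linearity of the derivation. No integration by parts, product rule expansion, or use of the explicit forms of $M$ and $A_b$ is needed at this stage — those expansions are deferred to the subsequent lemmas in this appendix, which compute $e_a\pa{M_{b,d,c}}$ and $e_a\pa{e_c^T J(s)^T A_b J(s)e_d}$ term by term (using $\ref{H2}$ to kill the Hessians of $s$ that would otherwise appear).

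There is essentially no obstacle here: the "hard part" is purely bookkeeping, namely being careful that the formula from Proposition~\ref{prop:e_g_e_e} is being applied with the index slots in the right order ($e_b$ acting, then $c,d$ as the metric arguments) so that the resulting $M_{b,d,c} + M_{b,c,d}$ pattern is correct. The only place a genuine hypothesis is invoked is implicitly through Proposition~\ref{prop:e_g_e_e} itself, which already requires $\ref{H2}$; since $A$ is nowhere dense, the identity holds on a dense open subset of $\X$, which suffices for all downstream use. The proof is therefore one line: apply $e_a$ to the identity of Proposition~\ref{prop:e_g_e_e} and use linearity of the derivation $e_a$.
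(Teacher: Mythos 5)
Your proposal is correct and matches the paper's own proof exactly: the paper likewise substitutes the identity of Proposition~\ref{prop:e_g_e_e} for $e_b\pa{g^D\pa{e_c,e_d}}$ and then distributes $e_a$ over the three terms by linearity. Nothing further is needed.
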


\begin{proof}
    \begin{align*}
        e_a \pa{e_b\pa{g^D\pa{e_c, e_d}}} &= e_a\pa{M_{b,d,c} + M_{b,c,d} + e_c^T J(s)^T A_b J(s) e_d } \\
                      &= e_a\pa{M_{b,d,c}} + e_a\pa{M_{b,c,d}} + e_a\pa{e_c^T J(s)^T A_b J(s) e_d }
    .\end{align*}

\end{proof}

\begin{lemma}
    \begin{multline}
        e_a \pa{M_{b,c,d}} = ~ e_a\pa{\Hess{p_c}e_b}^T D(x) e_d \\
                    + \pa{\Hess{p_c}e_b}^T J(s)^T A_a J(s) e_d \\
                    + \pa{\Hess{p_c}e_b}^T D(x) \pa{\Hess{p_d} e_a}
    .\end{multline}
\end{lemma}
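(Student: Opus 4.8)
The plan is to start from the closed form for $M_{b,c,d}$ recorded in Proposition~\ref{prop:e_g_e_e}. There we have
\[
M_{b,c,d} = \pa{\Hess{p_c} e_b}^T J(s)^T \pa{P - pp^T} J(s)\, e_d = \pa{\Hess{p_c} e_b}^T D(x)\, e_d,
\]
the second equality using the identity $D(x) = J(s)^T \pa{P-pp^T} J(s)$ from Section~\ref{sec:curv-of-data-leaves}. Thus $M_{b,c,d}$ is a scalar-valued function of $x$ built as a triple product of the row vector $\pa{\Hess{p_c} e_b}^T$, the matrix $D(x)$, and the column vector $e_d$, each of which is smooth on the leaf. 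Since $e_a = \sum_i \partial_{i} p_a\, \partial_{i}$ is an ordinary directional derivative in the ambient coordinates, acting entrywise on matrix- and vector-valued quantities, the whole computation is just the Leibniz rule applied to this triple product.

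Carrying this out gives exactly three terms,
\[
e_a\pa{M_{b,c,d}} = \pa{e_a\pa{\Hess{p_c}e_b}}^T D(x)\, e_d + \pa{\Hess{p_c}e_b}^T e_a\pa{D(x)}\, e_d + \pa{\Hess{p_c}e_b}^T D(x)\, e_a\pa{e_d},
\]
where I have used that $e_a$ commutes with transposition (it acts entrywise), so $e_a\pa{\pa{\Hess{p_c}e_b}^T} = \pa{e_a\pa{\Hess{p_c}e_b}}^T$. For the middle term I would invoke the lemma already proved in Section~\ref{sec:curv-of-data-leaves}, which states $e_a\pa{D}(x) = J(s)^T A_a J(s)$ with $A_a$ as defined there; this is where assumption~\ref{H2} (vanishing of $\Hess{s_k}$ off a nowhere dense set) enters. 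For the last term, observe that $e_d$ is simply the gradient column vector $\nabla p_d = \pa{\partial_1 p_d,\dots,\partial_d p_d}^T$, so applying $e_a$ componentwise gives $\pa{e_a(e_d)}_l = \sum_k \partial_{k} p_a\, \partial_{k}\partial_{l} p_d = \pa{\Hess{p_d}\, e_a}_l$, i.e. $e_a\pa{e_d} = \Hess{p_d}\, e_a$ — the same elementary manipulation that underlies the proof of Proposition~\ref{prop:lie_bracket_nablap_a}. Substituting these two identities yields the stated formula, with the first term left in the form $\pa{e_a\pa{\Hess{p_c}e_b}}^T D(x)\, e_d$; expanding $e_a\pa{\Hess{p_c}e_b}$ further would use the proposition rewriting $\Hess{p_a}e_b$ in terms of first derivatives of $p$ and $s$, but that belongs to the following lemma and is not needed here.

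I do not expect any genuine obstacle: the argument is purely a Leibniz-rule bookkeeping exercise. The only points requiring care are (i) keeping the three matrix/vector factors in the correct left-to-right order, since $\Hess{p_c}e_b$ and $\Hess{p_d}e_a$ are columns and $D(x)$ is symmetric, and (ii) recognizing that $e_a$ applied to the column field $e_d$ reproduces a Hessian acting on $e_a$, which is what makes the third term symmetric-looking in $a$ and $d$.
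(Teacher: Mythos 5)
Your proposal is correct and follows essentially the same route as the paper: write $M_{b,c,d}=\pa{\Hess{p_c}e_b}^T D(x)\,e_d$, apply the Leibniz rule to the triple product, then substitute $e_a\pa{D}(x)=J(s)^T A_a J(s)$ for the middle term and $e_a\pa{e_d}=\Hess{p_d}\,e_a$ for the last. No gaps.
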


\begin{proof}
    \begin{align*}
        e_a \pa{M_{b,c,d}} = &~ e_a \pa{e_b^T \Hess{p_c}^T D(x) e_d} \\
                   = &~ e_a\pa{\Hess{p_c}e_b}^T D(x) e_d \\
                     & + \pa{\Hess{p_c}e_b}^T e_a\pa{D(x)} e_d \\
                   & + \pa{\Hess{p_c}e_b}^T D(x) e_a\pa{e_d} \\
                   = &~ e_a\pa{\Hess{p_c}e_b}^T D(x) e_d \\
                   & + \pa{\Hess{p_c}e_b}^T J(s)^T A_a J(s) e_d \\
                   & + \pa{\Hess{p_c}e_b}^T D(x) \pa{\Hess{p_d} e_a}
    .\end{align*}
\end{proof}

\begin{lemma}
    \begin{align*}
        e_a \pa{\Hess{p_c} e_b}
        = &~ \sum_k \Big[- \pa{u_{ak}}\pa{\sum_i \partial_i s_k \partial_i p_b} e_c \\
         &\qquad + \pa{\delta_{ck} - p_k} \pa{\sum_{i,j} \partial_i s_k \partial_i\partial_j p_b\partial_j p_a}e_c \\
         &\qquad + \pa{\delta_{ck} - p_k} \pa{\sum_i \partial_i s_k \partial_i p_b}  \Hess{p_c}e_a \Big] \\
         &~ - \sum_k \Big[  \pa{u_{ac}}\pa{\sum_i \partial_i s_k \partial_i p_b} e_k\\
          &\qquad + p_c \pa{\sum_{i,j} \partial_i s_k \partial_i\partial_j p_b\partial_j p_a} e_k\\
         &\qquad + p_c \pa{\sum_i \partial_i s_k \partial_i p_b}\Hess{p_k} e_a
         \Big]
    .\end{align*}
\end{lemma}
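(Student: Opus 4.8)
The plan is to differentiate the closed formula for $\Hess{p_c}e_b$ established in Section~\ref{sec:curv-of-data-leaves}, namely
\[
\Hess{p_c} e_b = \pa{\sum_{k,i}(\delta_{ck}-p_k)\,\partial_i s_k\,\partial_i p_b}\, e_c - \sum_k \pa{p_c\sum_i \partial_i s_k\,\partial_i p_b}\, e_k ,
\]
using the Leibniz rule for the derivation $e_a = \sum_j \partial_j p_a\,\partial_j$. Writing $\alpha := \sum_{k,i}(\delta_{ck}-p_k)\,\partial_i s_k\,\partial_i p_b$ and $\gamma_k := p_c\sum_i \partial_i s_k\,\partial_i p_b$ for the scalar coefficients, the product rule gives
\[
e_a\pa{\Hess{p_c}e_b} = e_a(\alpha)\,e_c + \alpha\,e_a(e_c) - \sum_k e_a(\gamma_k)\,e_k - \sum_k \gamma_k\,e_a(e_k) ,
\]
so it remains only to evaluate the elementary derivatives $e_a(e_c)$, $e_a(e_k)$, $e_a(\alpha)$ and $e_a(\gamma_k)$.

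For the frame vector fields, since $e_c=\sum_l(\partial_l p_c)\,\partial_l$ with constant coordinate fields $\partial_l$, applying $e_a$ to the component functions yields $e_a(e_c)=\Hess{p_c}e_a$ and likewise $e_a(e_k)=\Hess{p_k}e_a$; this is exactly the identity already used implicitly in the proofs of Proposition~\ref{prop:lie_bracket_nablap_a} and Proposition~\ref{prop:e_g_e_e}. For the scalar coefficients I would apply Leibniz once more, using the three facts that $e_a(p_k)=\sum_j\partial_j p_a\,\partial_j p_k=u_{ak}$, that $e_a(\partial_i s_k)=\sum_j\partial_j p_a\,\partial_i\partial_j s_k=0$ almost everywhere by hypothesis~\ref{H2} (this is precisely what makes every Hessian-of-score contribution disappear), and that $e_a(\partial_i p_b)=\sum_j\partial_j p_a\,\partial_i\partial_j p_b=\sum_j\partial_i\partial_j p_b\,\partial_j p_a$, the $i$-th component of $\Hess{p_b}e_a$. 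Carrying this out gives
\[
e_a(\alpha) = -\sum_k u_{ak}\sum_i\partial_i s_k\,\partial_i p_b + \sum_k(\delta_{ck}-p_k)\sum_{i,j}\partial_i s_k\,\partial_i\partial_j p_b\,\partial_j p_a
\]
and
\[
e_a(\gamma_k) = u_{ac}\sum_i\partial_i s_k\,\partial_i p_b + p_c\sum_{i,j}\partial_i s_k\,\partial_i\partial_j p_b\,\partial_j p_a .
\]

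Substituting these back into the Leibniz expansion, the term $e_a(\alpha)\,e_c$ produces the first two summands of the first bracket in the statement, the term $\alpha\,e_a(e_c)=\alpha\,\Hess{p_c}e_a=\sum_k(\delta_{ck}-p_k)\pa{\sum_i\partial_i s_k\,\partial_i p_b}\Hess{p_c}e_a$ produces its third summand, the term $-\sum_k e_a(\gamma_k)\,e_k$ produces the first two summands of the second (subtracted) bracket, and $-\sum_k \gamma_k\,e_a(e_k)=-\sum_k\gamma_k\,\Hess{p_k}e_a$ produces its third summand; this is exactly the claimed identity. There is no genuine obstacle here: the whole argument is a twofold application of the product rule together with the vanishing of $\partial_i\partial_j s_k$ from~\ref{H2}; the only real care needed is the index bookkeeping when regrouping, and remembering that $e_a$ differentiates the moving-frame fields $e_c$ and $e_k$ as well as the scalar coefficients.
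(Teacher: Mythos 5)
Your proposal is correct and follows exactly the route the paper takes: apply the derivation $e_a$ via the Leibniz rule to the closed formula $\Hess{p_c}e_b=\alpha\,e_c-\sum_k\gamma_k e_k$ from the earlier proposition, using $e_a(e_c)=\Hess{p_c}e_a$, $e_a(p_k)=u_{ak}$ and the vanishing of $\partial_i\partial_j s_k$ from \ref{H2}. The paper's proof merely says the development is straightforward; your write-up supplies the bookkeeping and reproduces each term of the stated identity.
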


\begin{proof}
    The proof is straightforward by developing the following:
    \begin{align*}
        e_a \pa{\Hess{p_c} e_b} = &~ e_a\pa{\pa{\sum_k \pa{\delta_{ck} - p_k}\pa{\sum_i \partial_i s_k \partial_i p_b}} e_c - \sum_k p_c \pa{\sum_i \partial_i s_k \partial_i p_b} e_k} 
    .\end{align*}
\end{proof}

\begin{lemma}
    \begin{align*}
        e_a \pa{e_c^T J(s)^T A_b J(s) e_d} = &~ e_a^T \Hess{p_c}^T J(s)^T A_b J(s) e_d \\
                                  &~ + e_c^T J(s)^T e_a\pa{A_b} J(s) e_d \\
                                  &~ + e_c^T J(s)^T A_b J(s) \Hess{p_d} e_a
    .\end{align*}
\end{lemma}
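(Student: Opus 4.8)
The statement to prove is the last \textbf{Lemma} in the excerpt, namely the product-rule expansion
\[
e_a \pa{e_c^T J(s)^T A_b J(s) e_d} = e_a^T \Hess{p_c}^T J(s)^T A_b J(s) e_d + e_c^T J(s)^T e_a\pa{A_b} J(s) e_d + e_c^T J(s)^T A_b J(s) \Hess{p_d} e_a.
\]
The plan is to treat the scalar quantity $e_c^T J(s)^T A_b J(s) e_d$ as a product of four $x$-dependent factors and apply the Leibniz rule for the directional derivative $e_a(\cdot) = \sum_i \partial_i p_a\, \partial_i(\cdot)$, which acts as a derivation on the ring of smooth functions and hence on matrix/vector-valued expressions entry by entry.

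First I would observe that $J(s)$, $A_b$, $e_c$, $e_d$ are all smooth functions of $x$ on $\X \setminus A$, so $e_a$ distributes over their product into four terms: $e_a(e_c)^T J(s)^T A_b J(s) e_d$, $e_c^T e_a(J(s))^T A_b J(s) e_d$, $e_c^T J(s)^T e_a(A_b) J(s) e_d$, and $e_c^T J(s)^T A_b J(s) e_a(e_d)$. Next I would invoke the already-established identities: $e_a(e_c) = \Hess{p_c} e_a$ — this is exactly the computation of the directional derivative of the vector field $e_c = \sum_i \partial_i p_c\, \partial_i$ along $e_a$, which gives $\sum_{i} \partial_i p_a\, \partial_i \partial_j p_c$ in the $j$-th slot, i.e. $\Hess{p_c} e_a$ (the same identity implicitly used in Proposition~\ref{prop:lie_bracket_nablap_a} and Proposition~\ref{prop:e_g_e_e}) — and similarly $e_a(e_d) = \Hess{p_d} e_a$. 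The crucial simplification is that $e_a(J(s)) = 0$ almost everywhere: by hypothesis \ref{H2}, $\partial_i \partial_j s_k = 0$ on $\X \setminus A$, so $e_a(J(s))_{ki} = \sum_j \partial_j p_a\, \partial_j \partial_i s_k = 0$, and the second of the four terms vanishes. Substituting these into the Leibniz expansion and using $e_a(e_c)^T = e_a^T \Hess{p_c}^T$ yields the three displayed terms.

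The only genuine subtlety — and the step I would flag as the ``main obstacle'' if there is one — is the legitimacy of dropping the $e_a(J(s))$ term: the identity $\Hess{s_k} = 0$ holds only outside the nowhere-dense exceptional set $A$ (piecewise-linear activations are not twice differentiable at the kinks). So the equality is an ``almost everywhere'' statement, consistent with how \ref{H2} is invoked throughout Section~\ref{sec:curv-of-data-leaves} (e.g. in the Lemma computing $e_a(D)(x) = J(s)^T A_a J(s)$ and in Proposition~\ref{prop:e_g_e_e}). Everything else is a routine application of the product rule together with previously recorded formulas, so the proof is indeed ``straightforward'' as the paper states; I would present it as a three-line derivation mirroring the proof of the preceding Lemma on $e_a(M_{b,c,d})$.
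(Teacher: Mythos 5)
Your proof is correct and is exactly the argument the paper intends: the paper states this lemma without a written proof, and your Leibniz-rule expansion --- using $e_a(e_c)=\Hess{p_c}e_a$, $e_a(e_d)=\Hess{p_d}e_a$, and $e_a(J(s))=0$ almost everywhere by (H2) so that only the $e_a(A_b)$ term survives from the middle factor --- is the same three-term decomposition carried out explicitly in the proofs of the adjacent lemmas (e.g.\ the one for $e_a(M_{b,c,d})$). Your caveat that the identity holds only outside the nowhere-dense exceptional set $A$ is accurate and consistent with how (H2) is invoked throughout Section~\ref{sec:curv-of-data-leaves}.
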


\begin{lemma}
    \begin{align*}
        e_a \pa{A_b}_{kl} 
                  = &~ e_a^T \Hess{p_b} \pa{\pa{\delta_{kl}-p_l}e_k - p_k e_l} \\ 
                  &~ + \pa{e_a^T e_l} \pa{e_b^T e_k} - p_k e_b^T e_l \\
                  &~ + e_b^T \pa{\pa{\delta_{kl} - p_l} \Hess{p_k} e_a - p_k e_l} \\
                  &~ + {\pa{\delta_{kl} - p_l}\pa{e_b^T  e_k} - \pa{e_a^T e_k} \pa{e_b^T e_l}}
    .\end{align*}
\end{lemma}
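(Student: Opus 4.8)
The plan is to expand the entry $\pa{A_b}_{kl}$ via its definition from the preceding lemma,
\[
\pa{A_b}_{kl} = \sum_i \partial_{i} p_b \pa{\pa{\delta_{kl}-p_l}\partial_{i} p_k - p_k \partial_{i} p_l},
\]
and then apply the derivation $e_a = \sum_j \partial_{j} p_a\, \partial_{j}$ directly to this scalar function. Everything reduces to the Leibniz rule: inside each of the two summands, $\partial_{i} p_b\,\pa{\delta_{kl}-p_l}\,\partial_{i} p_k$ and $\partial_{i} p_b\, p_k\, \partial_{i} p_l$, the operator $\partial_{j}$ can land on one of three factors — on $\partial_{i} p_b$, on the scalar coefficient $\pa{\delta_{kl}-p_l}$ or $p_k$, or on $\partial_{i} p_k$ (resp. $\partial_{i} p_l$) — producing three groups of terms that I would collect separately.

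The only identities needed are elementary. First, $e_a\pa{p_m} = \sum_j \partial_{j} p_a\, \partial_{j} p_m = \ang{e_a, e_m}_{\re} = e_a^T e_m$; this handles the derivative of the scalar coefficients and produces the purely first-order products $\pa{e_a^T e_l}\pa{e_b^T e_k}$, $\pa{e_a^T e_k}\pa{e_b^T e_l}$, and the $p_k\, e_b^T e_l$, $\pa{\delta_{kl}-p_l}\, e_b^T e_k$ type terms (note the minus sign coming from $\partial_j\pa{\delta_{kl}-p_l} = -\partial_j p_l$). Second, $\sum_j \partial_{j} p_a\, \partial_{i}\partial_{j} p_m = \pa{\Hess{p_m} e_a}_i$, so contracting this with $\partial_{i} p_b$ and summing over $i$ yields $e_b^T \Hess{p_m} e_a$ (with $m=k$ or $m=l$), while the term where $\partial_j$ hits $\partial_i p_b$, after the $j$-summation, contracts with $\partial_{i} p_k$ or $\partial_{i} p_l$ to give $e_a^T \Hess{p_b} e_k$ and $e_a^T\Hess{p_b} e_l$ (using symmetry of the Hessian). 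Grouping the $\Hess{p_b}$ contributions gives the term $e_a^T \Hess{p_b}\pa{\pa{\delta_{kl}-p_l} e_k - p_k e_l}$; grouping the $\Hess{p_k}$, $\Hess{p_l}$ contributions gives the $e_b^T\pa{\cdots}$ term; and the leftover products of first derivatives assemble into the quadratic $e_\bullet^T e_\bullet$ pieces displayed.

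I expect the main (and essentially the only) difficulty to be bookkeeping: the two summands each spawn three Leibniz terms, so six terms total, and one must keep track of signs carefully. No appeal to \ref{H2} is needed here, since $A_b$ is written purely in terms of the probabilities $p_k$; their Hessians $\Hess{p_k}$ are \emph{not} assumed to vanish and are simply carried along, to be expanded further if desired using the earlier propositions expressing $\Hess{p_a} e_b$ through the frame. A final consistency check would be to verify the expression is symmetric in the expected indices and reduces correctly in the case $a=b$.
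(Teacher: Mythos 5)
Your plan is correct and coincides with the paper's own proof, which is literally the one-line instruction to develop $\sum_i \partial_i p_a \partial_i \pa{e_b^T \pa{\pa{\delta_{kl}-p_l}e_k - p_k e_l}}$ by the Leibniz rule — exactly the six-term bookkeeping you describe, with $e_a\pa{p_m}=e_a^Te_m$ and $e_a$ applied to a frame vector giving a Hessian contraction. One caveat: carrying out your computation faithfully gives
$e_a^T\Hess{p_b}\pa{\pa{\delta_{kl}-p_l}e_k-p_ke_l}+e_b^T\pa{\pa{\delta_{kl}-p_l}\Hess{p_k}e_a-p_k\Hess{p_l}e_a}-\pa{e_a^Te_l}\pa{e_b^Te_k}-\pa{e_a^Te_k}\pa{e_b^Te_l}$,
which differs from the displayed statement in a sign on $\pa{e_a^Te_l}\pa{e_b^Te_k}$ and in the stray first-order terms $-p_k\,e_b^Te_l$ and $\pa{\delta_{kl}-p_l}\pa{e_b^Te_k}$ (which together reconstitute $\pa{A_b}_{kl}$ itself); these appear to be typos in the statement rather than a gap in your argument, so your final consistency check is well advised.
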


\begin{proof}
    The proof is straightforward by developing the following:
    \begin{align*}
        e_a \pa{A_b}_{kl} = &~ \sum_i \partial_i p_a \partial_i \pa{e_b^T \pa{\pa{\delta_{kl}-p_l}e_k - p_k e_l}} 
    .\end{align*}
\end{proof}

\begin{lemma}
    \begin{equation}
    \begin{split}
        e_a \pa{g^D\pa{e_b,\bra{e_c,e_d}}} 
        = &~ e_a^T \Hess{p_b} D(x) \pa{ \Hess{p_d}e_c - \Hess{p_c}e_d } \\
         &~ + e_b^T J(s)^T A_a J(s) \pa{\Hess{p_d}e_c - \Hess{p_c} e_d} \\
         &~ + e_b^T D(x) \pa{B_{a,d,c} - B_{a,c,d}}.
    \end{split}
    \end{equation}
    with 
    \begin{align*}
        B_{a,c,d} = &~ \sum_k \Big[- \pa{u_{ak}}\pa{\sum_i \partial_i s_k \partial_i p_d} e_c \\
         &\qquad + \pa{\delta_{ck} - p_k} \pa{\sum_{i,j} \partial_i s_k \partial_i\partial_j p_d\partial_j p_a}e_c \\
         &\qquad + \pa{\delta_{ck} - p_k} \pa{\sum_i \partial_i s_k \partial_i p_d} \Hess{p_c} e_a\\
         &\qquad - \pa{u_{ac}}\pa{\sum_i \partial_i s_k \partial_i p_d} e_k\\
          &\qquad - p_c \pa{\sum_{i,j} \partial_i s_k \partial_i\partial_j p_d\partial_j p_a} e_k\\
         &\qquad - p_c \pa{\sum_i \partial_i s_k \partial_i p_d}\Hess{p_k} e_a
         \Big]
    \end{align*}
\end{lemma}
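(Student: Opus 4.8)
The plan is to reduce the whole identity to a single application of the Leibniz rule for the derivation $e_a = \sum_i \partial_{i} p_a\,\partial_{i}$, after rewriting the Lie bracket. First I would use Prop.~\ref{prop:lie_bracket_nablap_a} to replace $\bra{e_c,e_d}$ by $\Hess{p_d}e_c - \Hess{p_c}e_d$, so that, writing $w := \Hess{p_d}e_c - \Hess{p_c}e_d$,
\[
g^D\pa{e_b,\bra{e_c,e_d}} = e_b^T D(x)\, w,
\]
a product of the three factors $e_b^T$, $D(x)$ and $w$.

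Next I would apply $e_a$ and distribute it over these three factors:
\[
e_a\pa{e_b^T D(x)\, w} = \pa{e_a(e_b)}^T D(x)\, w + e_b^T\, e_a\pa{D(x)}\, w + e_b^T D(x)\, e_a(w).
\]
For the first term, the same computation underlying Prop.~\ref{prop:lie_bracket_nablap_a} gives $e_a(e_b) = \Hess{p_b}e_a$, and since $\Hess{p_b}$ is symmetric, $\pa{\Hess{p_b}e_a}^T = e_a^T\Hess{p_b}$; this produces the line $e_a^T\Hess{p_b}D(x)\pa{\Hess{p_d}e_c - \Hess{p_c}e_d}$. For the second term, the earlier lemma expressing $e_a\pa{D}(x) = J(s)^T A_a J(s)$ yields $e_b^T J(s)^T A_a J(s)\pa{\Hess{p_d}e_c - \Hess{p_c}e_d}$.

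For the last term I would observe that $\Hess{p_d}e_c$ and $\Hess{p_c}e_d$ are both instances of the expression handled by the lemma computing $e_a\pa{\Hess{p_c}e_b}$: with $b$ replaced by $d$ that lemma gives exactly the displayed expression for $B_{a,c,d}$, i.e.\ $B_{a,c,d} = e_a\pa{\Hess{p_c}e_d}$, and symmetrically $B_{a,d,c} = e_a\pa{\Hess{p_d}e_c}$, so that $e_a(w) = B_{a,d,c} - B_{a,c,d}$ and we obtain the summand $e_b^T D(x)\pa{B_{a,d,c} - B_{a,c,d}}$. Collecting the three pieces gives the claim. There is no genuine obstacle here; the only points that require care are (i) matching the index pattern of $w$ against the earlier $e_a\pa{\Hess{p_c}e_b}$ lemma so that $B_{a,c,d}$ is the correct substitution instance, and (ii) keeping track of transposes together with the symmetry of $\Hess{p_b}$ (and of $D(x)$), so that the first summand emerges as $e_a^T\Hess{p_b}D(x)(\cdots)$ rather than its transpose.
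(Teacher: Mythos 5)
Your proposal is correct and follows essentially the same route as the paper's proof: rewrite $\bra{e_c,e_d}$ via Proposition~\ref{prop:lie_bracket_nablap_a}, apply the Leibniz rule for the derivation $e_a$ to the product $e_b^T D(x) w$ using $e_a(e_b)=\Hess{p_b}e_a$ and $e_a(D)(x)=J(s)^TA_aJ(s)$, and identify $e_a(w)$ with $B_{a,d,c}-B_{a,c,d}$ through the previously established lemma for $e_a\pa{\Hess{p_c}e_b}$. Your extra care about the transposes and the index substitution $b\mapsto d$ is exactly the bookkeeping the paper leaves implicit.
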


\begin{proof}[Proof of the lemma]
    \begin{align*}
        e_a \pa{g^D\pa{e_b,\bra{e_c,e_d}}} = &~ e_a \pa{ e_b^T D(x) \pa{\Hess{p_d}e_c - \Hess{p_c}e_d} } \\
        = &~ e_a^T \Hess{p_b} D(x) \pa{ \Hess{p_d}e_c - \Hess{p_c}e_d } \\
         &~ + e_b^T J(s)^T A_a J(s) \pa{\Hess{p_d}e_c - \Hess{p_c} e_d} \\
         &~ + e_b^T D(x) {e_a\pa{\Hess{p_d}e_c - \Hess{p_c} e_d}}.
    \end{align*}
    Besides, $e_a \pa{\Hess{p_d}e_c}$ has already been computed previously.
\end{proof}

\revision{
\section{More experiments}
\label{app:more_exp}
In this section, we report some more experiments to ensure reproducibility. The code used to compute the DIM is available at \url{https://github.com/eliot-tron/curvcomputenn/}. The seed used for the following experiments is 42. On Table~\ref{tab:MNIST-experiment-seed-42} and Table~\ref{tab:CIFAR10-experiment-seed-42}, we report the extreme values of the matrix $\hat{D}(x) = \pa{g^D_x\pa{e_a,e_b}}_{a,b=1,\ldots,C}$ evaluated on six points of the test set of MNIST (see Figure~\ref{fig:figure5}) and CIFAR10 (see Figure~\ref{fig:figure6}) respectively.

\begin{table}[b]
    \centering
    \caption{Extreme values of the matrix $\hat{D}(x) = \pa{g^D_x\pa{e_a,e_b}}_{a,b=1,\ldots,C}$ evaluated on MNIST.}
    \label{tab:MNIST-experiment-seed-42}
\begin{tabular}{ccccccc}
\toprule
 & \multicolumn{2}{c}{Highest $g(e_a,e_b)$} & \multicolumn{2}{c}{Second highest $g(e_a, e_b)$} & \multicolumn{2}{c}{Lowest $g(e_a,e_b)$} \\
\cmidrule(lr){2-3} \cmidrule(lr){4-5} \cmidrule(lr){6-7}
Point & indices & value & indices & value & indices & value \\
$x$ &  $(a,b)$ & $\hat{D}(x)_{a,b}$ & $(a,b)$ & $\hat{D}(x)_{a,b}$ & $(a,b)$ & $\hat{D}(x)_{a,b}$\\
\midrule
n°0 & (4, 4) & 4.4510e-11 & (9, 9) & 4.1848e-11 & (4, 9) and (9, 4) & -4.3158e-11  \\
n°1 & (7, 7) & 3.9485e-05 & (3, 3) & 3.3201e-05 & (7, 3) and (3, 7) & -3.6186e-05  \\
n°2 & (9, 9) & 1.7384e-04 & (8, 8) & 1.4993e-04 & (9, 8) and (8, 9) & -1.6140e-04  \\
n°3 & (3, 3) & 3.9730e-08 & (5, 5) & 3.3480e-08 & (3, 5) and (5, 3) & -3.6456e-08  \\
n°4 & (8, 8) & 1.8038e-08 & (5, 5) & 1.4309e-08 & (5, 8) and (8, 5) & -1.6045e-08  \\
n°5 & (2, 2) & 4.1834e-09 & (8, 8) & 3.5215e-09 & (2, 8) and (8, 2) & -3.8379e-09  \\
\botrule
\end{tabular}
\end{table}

\begin{figure}
    \centering
    \includegraphics[width=0.8\textwidth]{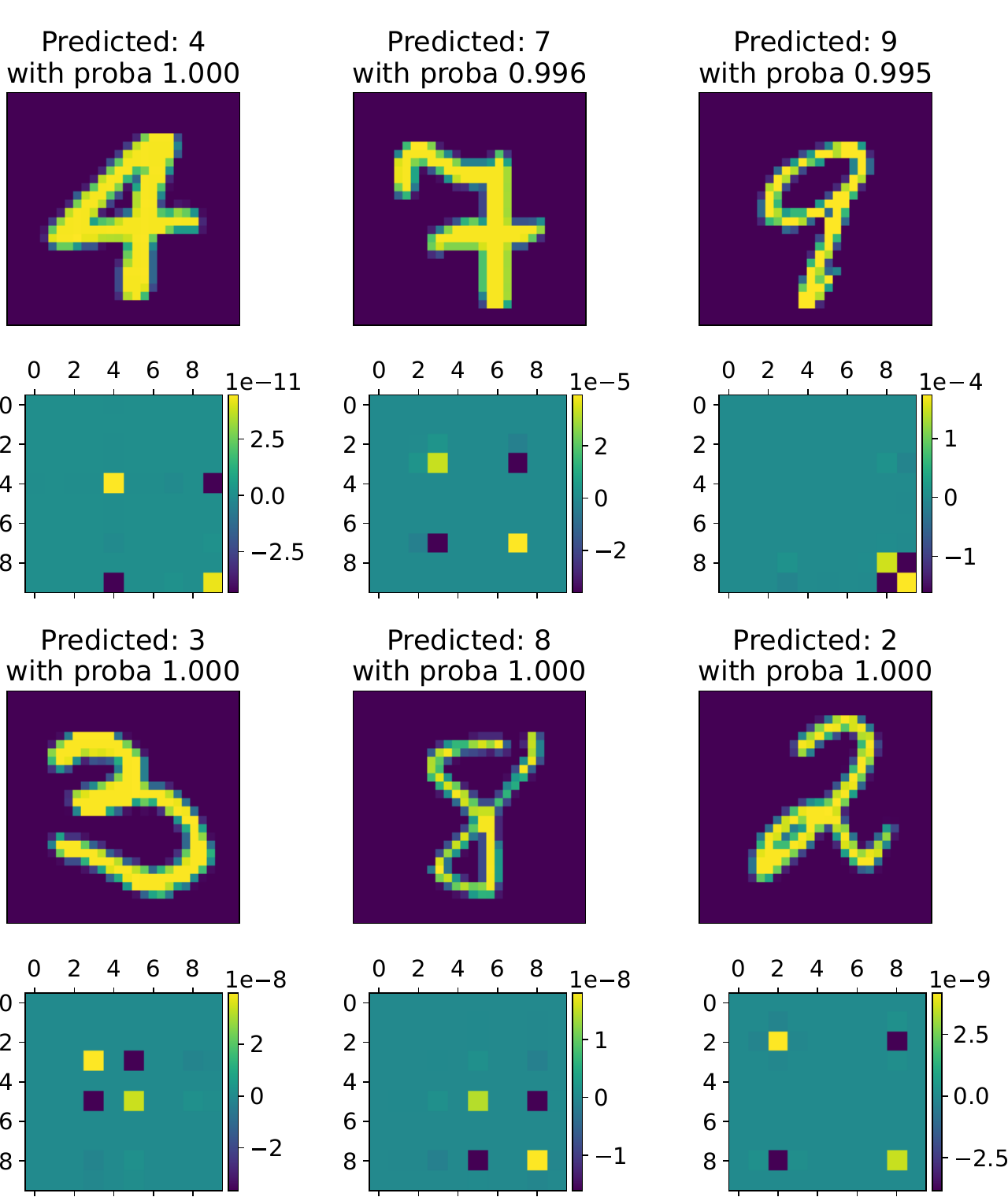}
    \caption{Couples of input point $x$ (above) and the corresponding matrix $\hat{D}(x) = \pa{g^D_x\pa{e_a,e_b}}_{a,b=1,\ldots,C}$ (below) on MNIST used for Table~\ref{tab:MNIST-experiment-seed-42} numbered from left to right and from top to bottom.}
    \label{fig:figure5}
\end{figure}

\begin{table}[b]
    \centering
    \caption{Extreme values of the matrix $\hat{D}(x) = \pa{g^D_x\pa{e_a,e_b}}_{a,b=1,\ldots,C}$ evaluated on CIFAR10.}
    \label{tab:CIFAR10-experiment-seed-42}
\begin{tabular}{ccccccc}
\toprule
 & \multicolumn{2}{c}{Highest $g(e_a,e_b)$} & \multicolumn{2}{c}{Second highest $g(e_a, e_b)$} & \multicolumn{2}{c}{Lowest $g(e_a,e_b)$} \\
\cmidrule(lr){2-3} \cmidrule(lr){4-5} \cmidrule(lr){6-7}
Point & indices & value & indices & value & indices & value \\
$x$ &  $(a,b)$ & $\hat{D}(x)_{a,b}$ & $(a,b)$ & $\hat{D}(x)_{a,b}$ & $(a,b)$ & $\hat{D}(x)_{a,b}$\\
\midrule
n°0 & (6, 6) & 8.9736e-01 & (2, 2) & 8.9682e-01 & (6, 2) and (2, 6) & -8.9709e-01  \\
n°1 & (8, 8) & 1.7268e-25 & (0, 0) & 6.2142e-26 & (8, 0) and (0, 8) & -1.0334e-25  \\
n°2 & (9, 9) & 4.4503e-15 & (1, 1) & 4.4414e-15 & (1, 9) and (9, 1) & -4.4458e-15  \\
n°3 & (7, 7) & 1.1507e-23 & (1, 1) & 1.0181e-23 & (7, 1) and (1, 7) & -1.0822e-23  \\
n°4 & (3, 3) & 8.8027e-02 & (4, 4) & 6.0344e-02 & (4, 3) and (3, 4) & -7.2665e-02  \\
n°5 & (0, 0) & 4.0292e-17 & (1, 1) & 3.1293e-17 & (1, 0) and (0, 1) & -3.5203e-17  \\
\botrule
\end{tabular}
\end{table}

\begin{figure}
    \centering
    \includegraphics[width=0.8\textwidth]{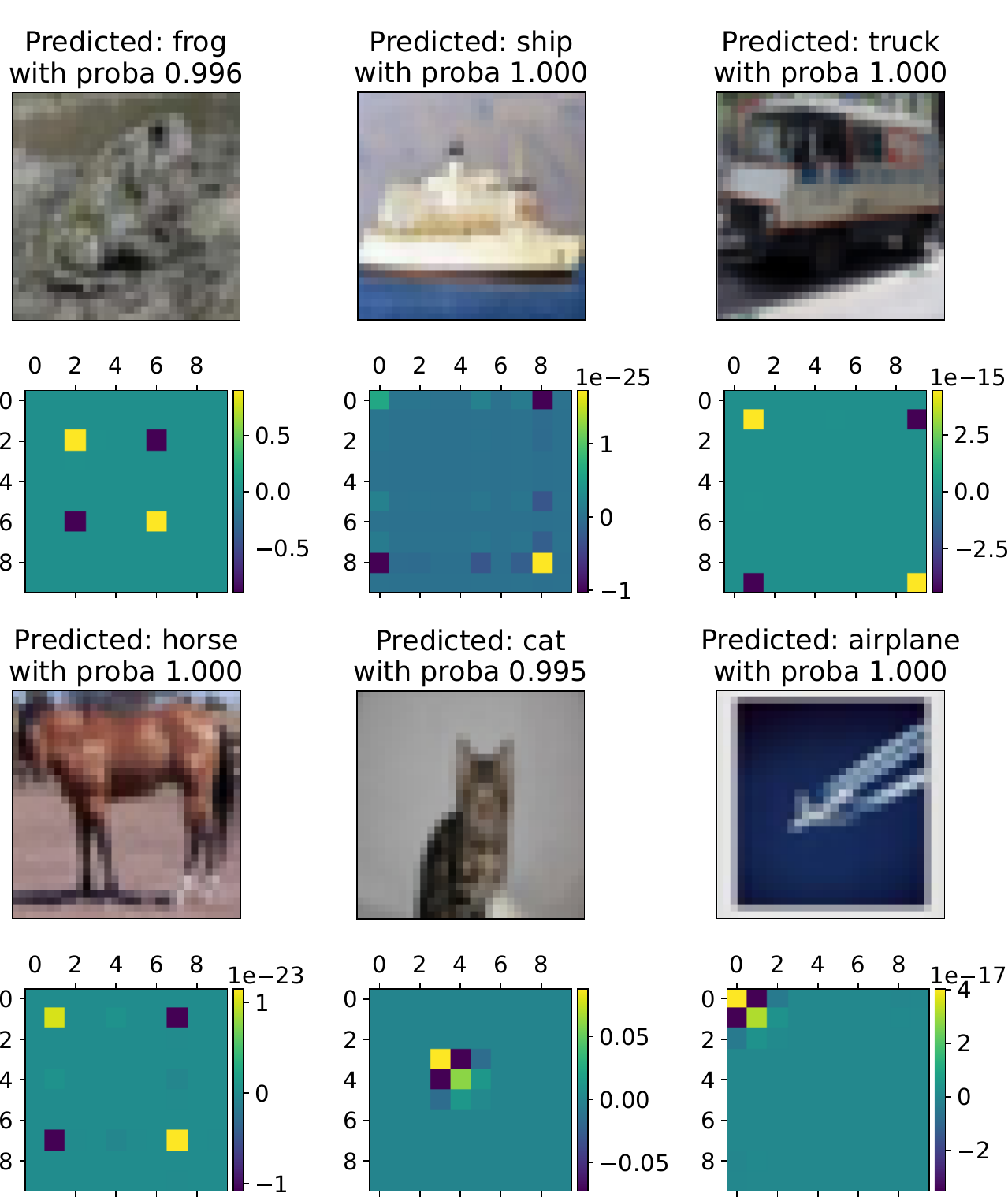}
    \caption{Couples of input point $x$ (above) and the corresponding matrix $\hat{D}(x) = \pa{g^D_x\pa{e_a,e_b}}_{a,b=1,\ldots,C}$ (below) on CIFAR10  used for Table~\ref{tab:CIFAR10-experiment-seed-42} numbered from left to right and from top to bottom.}
    \label{fig:figure6}
\end{figure}
}

\end{document}